\documentclass[submission,copyright]{eptcs}

\usepackage{breakurl}             
\usepackage{underscore}           

\usepackage{mathptmx}

\usepackage{amsmath}
\usepackage{amssymb}
\usepackage{amsthm}
\usepackage{graphicx}
\usepackage{float}
\usepackage{verbatim}
\usepackage{enumerate}
\usepackage{mathtools}

\usepackage{xcolor}

\newtheorem{theorem}{Theorem}[section]
\newtheorem{proposition}{Proposition}[section]
\newtheorem{example}{Example}
\newtheorem{lemma}{Lemma}[section]
\newtheorem{corollary}[theorem]{Corollary}
\newtheorem{definition}{Definition}[section]

\newcommand{\Section}[1]{Section~\ref{#1}}

\newcommand{\Theorem}[1]{Theorem~\ref{#1}}

\newcommand{\Lemma}[1]{Lemma~\ref{#1}}

\newcommand{\Corollary}[1]{Corollary~\ref{#1}}

\newcommand{\KB}{\textit{\bf K}\hspace{.01in}}
\newcommand{\Not}{\textit{\bf not}\,}
\newcommand{\OBB}{\sf OB}
\newcommand{\bfK}{{\textit {\bf  K}}}

\newcommand{\boldnot}{{\textit {\bf  not}\,}}

\newcommand{\KBB}{\mathcal{K}}
\newcommand{\boldK}{\bf {K}}

\newcommand{\KBatom}{{\KB-atom}}
\newcommand{\KBatoms}{{{\KB}-atoms}}
\newcommand{\M}{\mathcal{M}}

\newcommand{\MM}{\langle M, M_1 \rangle}
\newcommand{\N}{\mathcal{N}}

\newcommand{\NN}{\langle N, N_1 \rangle}
\newcommand{\MN}{\langle M, N \rangle}
\newcommand{\ff}{{\bf f}}
\newcommand{\uu}{{\bf u}}
\renewcommand{\tt}{{\bf t}}
\newcommand{\mneval}[1]{(I, \MN, \MN)( #1  )}

\newcommand{\mmeval}[1]{\mneval{#1}}
\newcommand{\KBA}{\textsf{KA}}
\newcommand{\OO}{\mathcal{O}}
\newcommand{\OB}[1]{{{\sf OB}_{\OO,#1}}}
\renewcommand{\P}{\mathcal{P}}

\renewcommand{\implies}{\supset}

\newcommand{\head}{head}
\newcommand{\Rule}{rule}

\newcommand{\bodyp}{body^+}
\newcommand{\bodyn}{body^-}

\newcommand{\lfp}[1]{{\bf lfp} ~{#1}}

\newcommand{\citename}[1]{\cite{#1}}
\newcommand{\citeyear}[1]{\cite{#1}}

\newcommand{\union}{\cup}
\newcommand{\intersect}{\cap}
\renewcommand{\iff}{\longleftrightarrow}

\newcommand{\btan}{b\kern0.04em t\kern-0.04eman}

\renewcommand{\email}[1]{\\{\footnotesize #1}}

\title{A Fixpoint Characterization of Three-Valued Disjunctive Hybrid MKNF Knowledge Bases}
\author{Spencer Killen, \qquad\qquad Jia-Huai You
\institute{Department of Computing Science \\University of Alberta\\ Edmonton, Alberta, Canada}
\email{\quad sjkillen@ualberta.ca \quad\qquad jyou@ualberta.ca}
}


\begin{document}

\raggedbottom
\maketitle

\begin{abstract}
	The logic of hybrid MKNF (minimal knowledge and negation as failure) is a powerful knowledge representation language that elegantly pairs ASP (answer set programming) with ontologies.
	Disjunctive rules are a desirable extension to normal rule-based reasoning and typically semantic frameworks designed for normal knowledge bases need substantial restructuring to support disjunctive rules.
	Alternatively, one may lift characterizations of normal rules to support disjunctive rules by inducing a collection of normal knowledge bases, each with the same body and a single atom in its head.
	In this work, we refer to a set of such normal knowledge bases as a head-cut of a disjunctive knowledge base.
	The question arises as to whether the semantics of disjunctive hybrid MKNF knowledge bases can be characterized using fixpoint constructions with head-cuts. Earlier, we have shown that head-cuts can be paired with fixpoint operators to capture the two-valued MKNF models of disjunctive hybrid MKNF knowledge bases.
	Three-valued semantics extends two-valued semantics with the ability to express partial information.
	In this work, we present a fixpoint construction that leverages head-cuts using an operator that iteratively captures three-valued models of hybrid MKNF knowledge bases with disjunctive rules.
	This characterization also captures partial stable models of disjunctive logic programs since a program can be expressed as a disjunctive hybrid MKNF knowledge base with an empty ontology.
	We elaborate on a relationship between this characterization and approximators in AFT (approximation fixpoint theory) for normal hybrid MKNF knowledge bases.
\end{abstract}

\section{Introduction}\label{section-introduction}

Lifschitz \citeyear{lifschitznonmonotonic1991} created MKNF, a modal autoepistemic logic, to unify several nonmonotonic logics including answer set programming.
This logic was later extended by Motik and Rosati \citeyear{motikreconciling2010} to form hybrid MKNF knowledge bases.
These knowledge bases couple ontologies with rule-based reasoning to enable highly expressive reasoning.
Knorr et al. \citeyear{knorrlocal2011} extended MKNF (and hybrid MKNF) to three-valued MKNF to enable reasoning with partial information.
Disjunctive hybrid MKNF knowledge bases are more expressive than their normal counterparts, they allow the heads of rules to contain a disjunction.
Three-valued disjunctive hybrid MKNF knowledge bases pose many new, interesting challenges and problems in the field of knowledge representation and reasoning.

For normal hybrid MKNF knowledge bases, Knorr et al. \citeyear{knorrlocal2011} define alternating fixpoint operators for the three-values semantics of normal hybrid MKNF knowledge bases and Liu and You \citeyear{liuyou2021} show that these operators can be recast into the framework of AFT.  This shows a close relationship between Knorr et al.'s fixpoint constructions and AFT for normal hybrid MKNF knowledge bases. More interestingly, Liu and You present a new, richer approximator based on which the well-founded semantics can be computed tractably for a larger class of these knowledge bases. 

Killen and You \citeyear{killenfixpoint2021} characterize the two-valued semantics of disjunctive hybrid MKNF knowledge bases using a collection of induced normal logic programs called head-cuts.
They give an operator that takes a two-valued partition and computes a fixpoint for a single head-cut.
A head-cut is a slice of a disjunctive logic program, a normal program whose models relate to the models of the disjunctive program.
Killen and You also provide a framework for reducing the size of the set of head-cuts needed to characterize models for a disjunctive hybrid MKNF knowledge base.
They show how their operator may be integrated into a solver, but the work is limited to two-valued semantics.
It remains unanswered whether this technique of using head-cuts and fixpoint operators can be applied to the three-valued hybrid MKNF knowledge bases defined by Knorr et al. \citeyear{knorrlocal2011} or partial stable semantics \citename{przymusinskistable1991}.

In \Section{section-motivation}, we give an overview of the motivation for a fixpoint characterization of three-valued models of disjunctive hybrid MKNF.
\Section{section-preliminaries} introduces preliminary definitions that are used throughout this work.
In \Section{section-main}, we present a fixpoint construction that captures the three-valued semantics of disjunctive hybrid MKNF knowledge bases. This operator builds upon and subsumes the operator for two-valued semantics \citename{killenfixpoint2021}.
Next, in \Section{section-aft} we outline the relationship between our characterization and approximators in AFT for normal hybrid MKNF knowledge bases.
Finally, we provide discussion in section \ref{section-discussion}.

\section{Motivation}\label{section-motivation}

MKNF \citename{lifschitznonmonotonic1991} is a framework that permits a variety of extensions.
The framework can express stable model semantics without relying on program transformation. 
One highly desired extension is the ability to reason with both the closed-world and open-world assumptions.
While open-world reasoning, which is employed by description logics and ontologies, requires proof of falsity, closed-world reasoning, which is employed by ASP, provides more intuitionistic reasoning.
Hybrid MKNF is a framework for combining answer set programming with ontologies.
It is faithful to the underlying semantics of the description logic and ASP without increasing the combined reasoning complexity if the ontology's entailment relation can be computed in polynomial time \citename{motikreconciling2010}.
Unlike other hybrid reasoning frameworks, hybrid MKNF constrains external theories to be monotonic which allows for the development of fixpoint operators.
This framework can be leveraged to reframe the semantics of hybrid reasoning with answer set programming in a variety of ways.

The advent of conflict-driven learning has ushered in efficient answer set solvers.
The CDNL algorithm from Gebser et al. \citeyear{gebserconflictdriven2012} offers two methods of capturing stable model semantics.
One method relies on loop formulas and the other on unfounded sets.
To construct loop formulas one needs a dependency graph, however, for hybrid systems that combine standalone ontologies with ASP, this requirement is not so easy to realize.
One would either need to handcraft a method of computing a dependency graph for their ontology or face the tremendous cost of computing a dependency graph in a way that would suit any ontology.
Computing unfounded sets is intractable for normal hybrid MKNF and this carries over to disjunctive hybrid MKNF \citename{killenunfounded}.
While an approximation of unfounded sets can be efficiently computed \citename{killenunfounded}, this approach fails when inconsistencies occur as a result of the ontology.
One promising method of resolving the issues that arise due to these inconsistencies lies in the paraconsistent logic developed by Kaminski et al. \citeyear{kaminskiefficient2015} that builds upon the three-valued extension of hybrid MKNF from Knorr et al. \citeyear{knorrlocal2011}.
The benefits of this logic motivate developing characterizations of the three-valued logic as it is subsumed by the paraconsistent logic.

Fixpoint operators are an attractive alternative to dependency graphs.
They implicitly capture dependencies lazily and warrant interest in their own right.
Approximation fixpoint theory offers a succinct and elegant framework for establishing nonmonotonic semantics \citename{denecker2000approximations}.
However, this framework struggles with disjunctive knowledge bases.
One approach is to lift the framework to support disjunctive knowledge as was done by Heyninck and Arieli \citename{heyninckapproximation2021}.
However, an approach like this creates a new framework and thus requires each piece of work on normal knowledge bases to be heavily altered to support the disjunctive case.
We desire a way of restructuring disjunctive knowledge bases so that prior work on normal knowledge bases can be easily lifted to support disjunctive rules. 
Killen and You define a family of operators for normal knowledge bases induced by a disjunctive knowledge base \citeyear{killenfixpoint2021} and
their framework can be used to promote operators defined on normal knowledge bases automatically to support rules with disjunctive heads.
This approach also allows for the identification of classes of programs and partitions that can be verified in polynomial time for two-valued semantics.


\section{Preliminaries}\label{section-preliminaries}

MKNF is a modal autoepistemic logic defined by Lifschitz \citeyear{lifschitznonmonotonic1991} which extends first-order logic with two modal operators, $\KB$ and  $\Not$, for minimal knowledge and negation as failure respectively.
The logic was later extended by Motik and Rosati \citeyear{motikreconciling2010} to form hybrid MKNF knowledge bases, which support reasoning with ontologies.
We use Knorr et al.'s \citeyear{knorrlocal2011} definition of three-valued hybrid MKNF knowledge bases.
A (three-valued) \textit{MKNF structure} is a triple $(I, \M, \N)$ where $I$ is a (two-valued first-order) interpretation and $\M = \langle M, M_1 \rangle$ and $\N = \langle N, N_1 \rangle$ are pairs of sets of first-order interpretations and $M \supseteq M_1$, $N \supseteq N_1$.
We use three truth values $\ff$, $\uu$, and $\tt$ with the ordering $\ff < \uu < \tt$. The $min$ and $max$ functions over truth values respect this ordering.
Hybrid MKNF knowledge bases rely on the standard name assumption under which every first-order interpretation in an
MKNF interpretation is required to be a Herbrand interpretation with a countably infinite number of additional constants \citename{motikreconciling2010}.
We use $\Delta$ to denote the set of all these constants.
We use $\phi[\alpha/x]$ to denote the formula obtained by replacing all free occurrences of variable x in $\phi$ with the term $\alpha$.
Using $\phi$ and $\sigma$ to denote MKNF formulas, the evaluation of an MKNF structure is defined as follows:
\begin{align*}
	(I, \M, \N)({p(t_1, \dots ,~t_n)}) & =
	\left\{\begin{array}{ll}
		\tt & \textrm{iff } p(t_1, \dots,~t_n) \textrm{ is true in } I  \\
		\ff & \textrm{iff } p(t_1, \dots,~t_n) \textrm{ is false in } I \\
	\end{array}\right.                                                                                             \\
	(I, \M, \N)({\neg \phi})            & =
	\left\{\begin{array}{ll}
		\tt & \textrm{iff } (I, \M, \N)({\phi}) = \ff \\
		\uu & \textrm{iff } (I, \M, \N)({\phi}) = \uu \\
		\ff & \textrm{iff } (I, \M, \N)({\phi}) = \tt \\
	\end{array}\right.                                                                                             \\
	(I, \M, \N)({\exists x, \phi})      & = max\{(I, \M, \N)({\phi[\alpha/x]}) ~|~\alpha \in \Delta\}                                       \\
	(I, \M, \N)({\forall x, \phi})      & = min\{(I, \M, \N)({\phi[\alpha/x]}) ~|~\alpha \in \Delta\}                                       \\
	(I, \M, \N)({\phi \land \sigma})    & = min((I, \M, \N)({\phi}), (I, \M, \N)({\sigma}))                                                 \\
	(I, \M, \N)({\phi \lor \sigma})     & = max((I, \M, \N)({\phi}), (I, \M, \N)({\sigma}))                                                 \\
	(I, \M, \N)({\phi \subset \sigma})  & = \tt \textrm{ iff } (I, \M, \N)({\phi}) \geq (I, \M, \N)({\sigma}) \textrm{ and $\ff$ otherwise} \\
	(I, \M, \N)({\KB \phi})             & =
	\left\{\begin{array}{ll}
		\tt & \textrm{iff  } (J, \MM, \N)({\phi}) = \tt \textrm{ for all } J \in M    \\
		\ff & \textrm{iff  } (J, \MM, \N)({\phi}) = \ff \textrm{ for some } J \in M_1 \\
		\uu & \textrm{otherwise}
	\end{array}\right.                                                                                             \\
	(I, \M, \N)({\Not \phi})            & =
	\left\{\begin{array}{ll}
		\tt & \textrm{iff  } (J, \M, \NN)({\phi}) = \ff \textrm{ for some } J \in N_1 \\
		\ff & \textrm{iff  } (J, \M, \NN)({\phi}) = \tt \textrm{ for all } J \in N    \\
		\uu & \textrm{otherwise}
	\end{array}\right.
\end{align*}

Intuitively, this logic leverages two sets of interpretations, one for true knowledge and the other for possibly-true knowledge.
A \KBatom{} $\KB a$ is true if $a$ is true in every ``true'' interpretation, $\Not a$ holds if $a$ is false in some ``possibly-true'' interpretation.
$\KB a$ and $\Not a$ are both undefined otherwise.
When we evaluate formulas in this logic, we use a pair of these sets so that $\Not$-atoms may be evaluated independently from \KBatoms{} when checking knowledge minimality.
Note that first-order atoms are evaluated under two-valued interpretations, this is deliberate as, without modal operators, the semantics is essentially the same as first-order logic.
Also note that under three-valued MKNF, logic implication $\phi \subset \sigma$ may not be logically equivalent to $\phi \vee \neg \sigma$ unless both $\phi$ and $\sigma$ are first-order formulas.

Knorr et al. define their three-valued semantics for the entire language of MKNF \citeyear{knorrlocal2011} which subsumes disjunctive hybrid MKNF knowledge bases.
A disjunctive hybrid MKNF knowledge base contains a program and ontology both are restricted MKNF formulas which we will now define.

An (MKNF) program $\P$ is a set of (MKNF) rules. A rule $r$ is written as follows:
\begin{align*}\label{MKNFRules}
	\KB h_0,\dots,~ \KB h_i \leftarrow \KB p_0,\dots,~\KB p_j,~ \Not n_0,\dots,~\Not n_k
\end{align*}
In the above, $h_0, p_0, n_0, \dots, h_i, p_j, n_k $ are function-free first-order atoms
of the form $p(t_0, \dots,~t_n )$ where $p$ is a predicate and $t_0, \dots,~t_n$ are either constants or variables. Such a rule is called {\em normal} if $i=0$.
An MKNF formula $\phi$ is \textit{ground} if it does not contain variables.
The corresponding MKNF formula for a rule $r$ is as follows:
\begin{align*}
	\pi(r) = \forall \vec{x},~ \KB h_0 \lor \dots \lor  \KB h_i \subset \KB p_0 \land \dots \land \KB p_j \land \Not n_0 \land \dots \land \Not n_k
\end{align*}
where $\vec{x}$ is a vector of all variables appearing in the rule.
We will use the following abbreviations:
\begin{align*}
	\pi(\P) = \bigwedge\limits_{r \in \P} \pi(r)
\end{align*}
\begin{align*}
	\head(r)       & = \{ \KB h_0, \dots,~ \KB h_i \}       &
	\bodyp(r)      & = \{ \KB p_0, \dots,~ \KB p_j \}         \\
	\bodyn(r)      & = \{ \Not n_0, \dots,~ \Not n_k \}     &
	\KB(\bodyn(r)) & = \{ \KB a ~|~ \Not a \in \bodyn(r) \}
\end{align*}

A {\em disjunctive hybrid MKNF knowledge base} (or disjunctive knowledge base for short) $\KBB = (\OO, \P)$ consists of an ontology $\OO$, which is a decidable description logic (DL) knowledge base translatable to first-order logic, and a program $\P$. $\KBB$ is called {\em normal} if all rules in $\P$ are normal.
We use $\pi(\OO)$ to denote the translation of $\OO$ to first-order logic and write $\pi(\KBB)$ to mean $\pi(\P) \land \KB \pi(\OO)$.
A {\em (three-valued) MKNF interpretation (pair)} $(M, N)$ is a pair of sets of first-order interpretations where $\emptyset \subset N \subseteq M$.
We say an MKNF interpretation $(M, N)$ \textit{satisfies} a knowledge base $\KBB = (\OO, \P)$ if for each $I \in M$,
$(I, \MN, \MN) (\pi (\KBB)) = \tt$.

\begin{definition} \label{model}
	A three-valued MKNF interpretation pair $(M, N)$ is a \textit{(three-valued) MKNF model} of a disjunctive hybrid MKNF knowledge base $\KBB$ if $(M, N)$ satisfies $\pi(\KBB)$  and for every three-valued MKNF interpretation pair $(M', N')$ where $M \subseteq M'$, $N \subseteq N'$, and $(M, N) \not= (M', N')$ we have some $I \in M'$ s.t. $(I, \langle M', N' \rangle, \MN)(\pi(\KBB)) \not= \tt$.
\end{definition}

Note that the second condition of our definition differs slightly from the original definition from Knorr et al \citeyear{knorrlocal2011}. They require that $M' = N'$ if $M = N$; we show that this condition is not needed for disjunctive hybrid MKNF knowledge bases while Knorr et al.'s definition applies to all MKNF formulas.
\begin{proposition}
	Let $\KBB$ be a disjunctive hybrid MKNF knowledge base and let $(M, N)$ and $(M', N')$ be MKNF interpretations of $\KBB$ such that $M \subseteq M'$, $N \subseteq N'$, $(M, N) \not= (M', N')$,  $(M, N)$ satisfies $\pi(\KBB)$, and $\forall I \in M', (I, \langle M', N' \rangle, \langle M, N \rangle)(\pi(\KBB)) = \tt$.
	We have ${\forall I \in M', (I, \langle M', M' \rangle, \langle M, N \rangle)(\pi(\KBB)) = \tt}$
\end{proposition}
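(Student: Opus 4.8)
The plan is to verify $\pi(\KBB)=\pi(\P)\land\KB\pi(\OO)$ one conjunct at a time and show that replacing the second component $\langle M',N'\rangle$ of the MKNF structure by $\langle M',M'\rangle$ cannot falsify a conjunct that was satisfied. First I would dispose of $\KB\pi(\OO)$: since $\pi(\OO)$ is first-order, $(J,\cdot,\cdot)(\pi(\OO))$ depends only on $J$, so the hypothesis $(I,\langle M',N'\rangle,\langle M,N\rangle)(\KB\pi(\OO))=\tt$ means $\pi(\OO)$ evaluates to $\tt$ in every $J\in M'$; hence $\KB\pi(\OO)$ is $\tt$ also under $\langle M',M'\rangle$. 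This reduces the statement to the following: for every ground instance $r$ of a rule of $\P$ and every $I\in M'$, the value of $\head(r)=\KB h_0\lor\dots\lor\KB h_i$ under $(I,\langle M',M'\rangle,\langle M,N\rangle)$ is at least the value of the body conjunction $\KB p_0\land\dots\land\KB p_j\land\Not n_0\land\dots\land\Not n_k$ under the same structure. Since $\bodyn(r)$ is evaluated entirely through the unchanged third component $\langle M,N\rangle$, its value $v^-$ is the same under $\langle M',N'\rangle$ and under $\langle M',M'\rangle$.

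The core observations concern a single modal atom $\KB a$ when the second component is $\langle M',M'\rangle$ rather than $\langle M',N'\rangle$. (a) Its value is $\tt$ exactly when $a$ is true in all $J\in M'$, which is the same criterion as under $\langle M',N'\rangle$, so the $\tt$-case is preserved in both directions. (b) It can never be $\uu$ under $\langle M',M'\rangle$, since $\uu$ would require $a$ to be neither true everywhere nor false anywhere in $M'$. (c) If it is $\ff$ under $\langle M',N'\rangle$ then it is $\ff$ under $\langle M',M'\rangle$, because a falsifying interpretation in $N'$ also lies in $M'$. Taking conjunctions and disjunctions, (a)--(c) imply that under $\langle M',M'\rangle$ both $\head(r)$ and the positive body $\KB p_0\land\dots\land\KB p_j$ are two-valued, their values do not exceed the corresponding values under $\langle M',N'\rangle$, and each is $\tt$ under $\langle M',M'\rangle$ if and only if it is $\tt$ under $\langle M',N'\rangle$.

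I would then split on the value $v^+$ of the positive body under $\langle M',M'\rangle$. If $v^+=\ff$, the body value is $\ff$ and the rule holds trivially. If $v^+=\tt$, then by (a) the positive body is $\tt$ under $\langle M',N'\rangle$ too, so under both structures the body value equals $v^-$; the hypothesis gives that the value of $\head(r)$ under $\langle M',N'\rangle$ is at least $v^-$, and it remains to transfer this to $\langle M',M'\rangle$. If $v^-=\ff$ this is immediate, and if $v^-=\tt$ then $\head(r)$ is $\tt$ under $\langle M',N'\rangle$, hence $\tt$ under $\langle M',M'\rangle$ by (a), so the rule holds.

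The remaining case, $v^+=\tt$ and $v^-=\uu$ with $\head(r)$ evaluating to $\uu$ under $\langle M',N'\rangle$, is the main obstacle: under $\langle M',M'\rangle$ the head is two-valued, and since no $\KB h_l$ is $\tt$ there (otherwise $\head(r)$ would already be $\tt$ under $\langle M',N'\rangle$), the head collapses to $\ff$, strictly below $v^-$. To close this case I would invoke the so-far-unused hypothesis that $(M,N)$ satisfies $\pi(\KBB)$: because every $\KB p_l$ is true throughout $M'\supseteq M$, the instance $r$ forces $(I,\langle M,N\rangle,\langle M,N\rangle)(\pi(r))=\tt$ with body value $\min(\tt,v^-)=\uu$, so some head atom $h_k$ must be non-false throughout $N$; one then has to exploit that $(M',N')$ extends $(M,N)$ to upgrade such an $h_k$ into a witness forcing $\head(r)$ to be non-false under $\langle M',M'\rangle$. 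I expect this to be the genuinely delicate step, since bridging ``a head atom holds throughout $N$'' and ``a head atom holds throughout $M'$'' is precisely what requires using how $M'$ and $N'$ sit above $M$ and $N$ (and in the degenerate situation $M=N$ this case does not arise at all, because $v^-$ is then necessarily two-valued).
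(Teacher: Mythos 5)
Your reductions are sound as far as they go: the handling of $\KB\pi(\OO)$, the observation that the negative body is evaluated against the unchanged third component $\langle M,N\rangle$, and properties (a)--(c) of $\KB$-atoms under $\langle M',M'\rangle$ are all correct, and they correctly dispose of every case except the one you single out ($v^+=\tt$, $v^-=\uu$, head undefined under $\langle M',N'\rangle$). The paper states this proposition without an accompanying proof, so there is no argument to compare yours against; but the step you defer is not merely delicate --- it cannot be completed from the stated hypotheses. Satisfaction of $\pi(r)$ by $(M,N)$ only yields a head atom $h_k$ with $\KB h_k$ non-false there, i.e.\ $h_k$ true in every $J\in N$; nothing forces any head atom to be true throughout $M'$, which is exactly what the (two-valued) head under $\langle M',M'\rangle$ would need in order to reach $\tt\geq\uu$.

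Concretely, take $\OO=\emptyset$ and $\P=\{\KB h\leftarrow \Not n\}$, and (writing first-order interpretations by the set of atoms they make true) let $N=\{\{h,n\}\}$, $M=\{\{h,n\},\emptyset\}$, $N'=\{\{h,n\},\{h\}\}$, $M'=\{\{h,n\},\{h\},\emptyset\}$. Then $\Not n$ is $\uu$ w.r.t.\ $\langle M,N\rangle$ ($n$ is not false in any $J\in N$ and not true in all of $M$), and $\KB h$ is $\uu$ both under $(I,\langle M,N\rangle,\langle M,N\rangle)$ and under $(I,\langle M',N'\rangle,\langle M,N\rangle)$ ($h$ is true in every member of $N$ and of $N'$, but false in $\emptyset$), so every hypothesis of the proposition holds; yet under $(I,\langle M',M'\rangle,\langle M,N\rangle)$ the atom $\KB h$ is $\ff$ (falsified by $\emptyset\in M'$) while the body is still $\uu$, so $\pi(r)$ evaluates to $\ff$. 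Thus the statement in the generality given fails precisely in your residual case. It does hold --- and your own argument already proves it --- under the additional assumption $M=N$, where the negative body is two-valued and the residual case is vacuous; since that is the only situation in which the paper's model definition differs from Knorr et al.'s, the correct course is to prove (and state) the proposition for $M=N$ rather than to keep trying to close the gap as stated.
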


If a knowledge base has an MKNF model, we say it is {\em MKNF-consistent}. If it does not have one, then it is {\em MKNF-inconsistent.}
In the rest of this paper, we assume that a given hybrid MKNF knowledge base $\KBB = (\OO, \P)$ is {\em DL-safe},
which ensures the decidability by requiring each variable in a rule $r \in \P$ to appear inside some predicate of $\bodyp(r)$ that does not appear in $\OO$.
Throughout this work, and without loss of generality \citename{knorrlocal2011},
we assume rules in $\P$ are ground.

We use $\KBA(\KBB)$ to denote the following:
$$\KBA(\KBB) = \bigg\{ \KB a ~|~ r \in \P,~ \KB a \in \head(r) \union \bodyp(r) \union \KB(\bodyn(r)) \bigg\}$$
and $\OB{S}$ to denote the objective knowledge of a set $S \subseteq \KBA(\KBB)$:
$$ \OB{S} = \big\{  \pi(\OO) \big\} \union \big\{ a ~|~ \KB a \in S \}$$

Sometimes it is convenient to restrict our focus to the \KBatoms{} in $\KBA(\KBB)$. A {\em (partial) partition} $(T, P)$ of $\KBA(\KBB)$ is a pair where $P \supseteq T$. It partitions the \KBatoms{} in $\KBA(\KBB)$ to be either true ($\KBA(\KBB) \intersect T$), false ($\KBA(\KBB) \setminus P$), or undefined ($\KBA(\KBB) \intersect (P \setminus T)$).

Knorr et al. \citeyear{knorrlocal2011} define that an MKNF interpretation pair $(M, N)$ induces a partition $(T, P)$ if for each $\KB a \in \KBA(\KBB)$:
\begin{itemize}
	\item $\KB a \in T$ if $\forall I \in M, \mmeval{\KB a} = \tt$,
	\item $\KB a \not\in P$ if $\forall I \in M, \mmeval{\KB a} = \ff$, and
	\item $\KB a \in P \setminus T$ if $\forall I \in M, \mmeval{\KB a} = \uu$
\end{itemize}

While every MKNF interpretation $(M, N)$ induces a unique partition $(T, P)$, in general, an MKNF interpretation that induces a given partition $(T, P)$ is not guaranteed to exist \cite{liuthreevalued2017}.
We say a partial partition can be \textit{extended} to an MKNF interpretation if there exists an MKNF interpretation that induces it.

\section{A Fixpoint Characterization}\label{section-main}
Before we give our characterization, we identify a subclass of partitions that is consistent with the ontology and where no immediate consequences can be derived from the ontology.
\begin{definition}
	We call a partition $(T, P)$ \textit{saturated} if $\OB{P}$ is consistent and $\OB{T} \not\models a$ for each $\KB a \in \KBA(\KBB) \setminus T$ and $\OB{P} \not\models a$ for each $\KB a \in \KBA(\KBB) \setminus P$.
\end{definition}

Intuitively, if a partition is saturated, then it is consistent with the ontology and $\OB{T}$ cannot derive additional \KBatoms{}.
For an arbitrary partition $(T, P)$, it is either easy to extend $(T, P)$ to a saturated partition or it is easy to conclude that no MKNF model induces $(T, P)$.
\begin{definition}
	Given a program $\P$, a \textit{head-cut} $R$ is a set $R \subseteq \P \times \KBA(\KBB)$ such that for each pair $(r, \KB h) \in R$ we have $\KB h \in \head(r)$ and there is at most one pair $(r, \KB h)$ in $R$ for any $r \in \P$.
\end{definition}
Because of the restriction on the number of times a rule may appear in a head-cut, head-cuts can function as normal logic programs where the head of the rule is the single selected atom.
For a head-cut $R$, we use $\head(R)$ (resp. $\Rule(R)$) to denote the set $\{ \KB h ~|~ (r, \KB h) \in R \}$ (resp. $\{ r ~|~ (r, \KB h) \in R \}$).
Now we build upon the definition of a supporting set and its accompanying $Q$ operator as defined by Killen and You \citeyear{killenfixpoint2021}.

\begin{definition}\label{HC}
	Given a saturated partition $(T, P)$ of a knowledge base  ${\KBB = (\OO, \P)}$, we define the set $H^{(T, P)}_{\KBB}$ to be a set of head-cuts such that for each $R \in H^{(T, P)}_{\KBB}$,
	we have
	(i) \begin{align*}
		\forall r \in \P:~ (r \in \Rule(R) \iff \left( \begin{array}{l}
			\Big( \bodyp(r) \subseteq P \land \KB(\bodyn(r)) \intersect T = \emptyset \Big) ~\land \vspace{3.5pt} \\
			\Big( \head(r) \intersect T = \emptyset \lor \big(\bodyp(r) \subseteq T \land \KB(\bodyn(r)) \intersect P = \emptyset\big) \Big)
		\end{array} \right.
	\end{align*}
	 \begin{align*}
		\textrm{and (ii)~~~} \forall (r, \KB h) \in R:~ (\KB h \in P \land \bigg( \KB h \in T \iff \bodyp(r) \subseteq T \land \KB(\bodyn(r)) \intersect P = \emptyset \bigg))
	\end{align*}

\end{definition}

Note that (i) and (ii) may conflict and it may not be possible to construct a nonempty set $H^{(T, P)}_{\KBB}$.
We soon show (\Lemma{empty-is-model}) how we rely on this property.
Intuitively, every head-cut in $H^{(T, P)}_{\KBB}$ contains every rule whose body is not false and additionally excludes the rules which contain true atoms in their head and whose bodies evaluate as undefined.
If, w.r.t. $(T, P)$, a rule $r$'s body is true (resp. undefined), then $r$ must be in a pair $(r, \KB h) \in R$ where $\KB h$ is true (resp. undefined).
By this construction, if a rule is not satisfied by $(T, P)$, then no valid head-cuts can be formed to meet the criteria of $H^{(T, P)}_{\KBB}$. The following lemma formalizes this property.
\begin{lemma}\label{empty-is-model}
	For a saturated partition $(T, P)$ of $\KBA(\KBB)$ where $\KBB = (\OO, \P)$, the set $H^{(T, P)}_{\KBB}$ is empty if and only if for every MKNF interpretation $(M, N)$ that induces $(T, P)$, $(M, N)$ does not satisfy $\pi(\P)$.
\end{lemma}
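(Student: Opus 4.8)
The plan is to prove the two directions of the biconditional separately, in both cases reasoning about how an MKNF interpretation $(M,N)$ that induces $(T,P)$ evaluates $\pi(\P)$ rule-by-rule. The key observation is that, since $(M,N)$ induces $(T,P)$, for each $\KB a \in \KBA(\KBB)$ the value $\mmeval{\KB a}$ is $\tt$ iff $\KB a \in T$, is $\ff$ iff $\KB a \notin P$, and is $\uu$ iff $\KB a \in P \setminus T$; consequently the body of a rule $r$ evaluates to $\tt$ iff $\bodyp(r)\subseteq T$ and $\KB(\bodyn(r))\cap P=\emptyset$, evaluates to $\ff$ iff $\bodyp(r)\not\subseteq P$ or $\KB(\bodyn(r))\cap T\neq\emptyset$, and evaluates to $\uu$ otherwise. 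Since $\pi(\P)$ is a conjunction of the $\pi(r)$ and first-order atoms are evaluated two-valuedly, $(M,N)$ satisfies $\pi(\P)$ iff for every $r\in\P$ and every $I\in M$ we have $\mmeval{\pi(r)}=\tt$, i.e. $\mmeval{\head(r)} \geq \mmeval{\body(r)}$ where $\mmeval{\head(r)}=max\{\mmeval{\KB h}\mid \KB h\in\head(r)\}$.

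For the ``if'' direction (contrapositive), suppose $H^{(T,P)}_{\KBB}$ is nonempty and pick $R\in H^{(T,P)}_{\KBB}$; I will show that \emph{every} $(M,N)$ inducing $(T,P)$ satisfies $\pi(\P)$. Take any $r\in\P$. If $\body(r)$ evaluates to $\ff$ under $(M,N)$, then $\pi(r)$ is satisfied trivially. Otherwise $\bodyp(r)\subseteq P$ and $\KB(\bodyn(r))\cap T=\emptyset$, so the left conjunct of the $\iff$ in condition (i) holds. I then split on whether $\head(r)\cap T=\emptyset$: if it is nonempty then $\mmeval{\head(r)}\geq\uu$, and moreover condition (i) forces $\bodyp(r)\subseteq T$ and $\KB(\bodyn(r))\cap P=\emptyset$, so $\mmeval{\head(r)}$ actually equals... wait — more carefully, the right conjunct of (i) being satisfiable together with $\head(r)\cap T\neq\emptyset$ gives $\bodyp(r)\subseteq T$ and $\KB(\bodyn(r))\cap P=\emptyset$, whence $\mmeval{\body(r)}=\tt$; but then $r\in\Rule(R)$ and by (ii) the selected $\KB h$ satisfies $\KB h\in T$, so $\mmeval{\head(r)}=\tt\geq\mmeval{\body(r)}$. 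If instead $\head(r)\cap T=\emptyset$ and $\body(r)$ evaluates to $\uu$, then $r\in\Rule(R)$ by (i) and the selected $\KB h$ satisfies $\KB h\in P$ (so $\mmeval{\KB h}\geq\uu$) and, by the $\iff$ in (ii) with $\bodyp(r)\not\subseteq T$ or $\KB(\bodyn(r))\cap P\neq\emptyset$, also $\KB h\notin T$, giving $\mmeval{\head(r)}=\uu=\mmeval{\body(r)}$. In all cases $\mmeval{\pi(r)}=\tt$, so $(M,N)$ satisfies $\pi(\P)$.

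For the ``only if'' direction (again contrapositive), suppose some $(M,N)$ inducing $(T,P)$ does satisfy $\pi(\P)$; I construct an $R\in H^{(T,P)}_{\KBB}$, showing $H^{(T,P)}_{\KBB}\neq\emptyset$. Let $\Rule(R)$ be exactly the set of rules $r$ satisfying the right-hand side of the $\iff$ in (i); this is forced, so I only need to choose, for each such $r$, a head atom $\KB h$ witnessing (ii). For $r\in\Rule(R)$, the left conjunct of (i) gives that $\body(r)$ does not evaluate to $\ff$. If $\body(r)$ evaluates to $\tt$, then since $(M,N)$ satisfies $\pi(r)$ we have $\mmeval{\head(r)}=\tt$, so some $\KB h\in\head(r)$ lies in $T$; pick that one — it satisfies $\KB h\in P$ and the $\iff$ of (ii), since $\bodyp(r)\subseteq T$ and $\KB(\bodyn(r))\cap P=\emptyset$. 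If $\body(r)$ evaluates to $\uu$, then $\mmeval{\head(r)}\geq\uu$ so some $\KB h\in\head(r)$ has $\mmeval{\KB h}\geq\uu$, i.e. $\KB h\in P$; moreover the right conjunct of (i) together with $r\in\Rule(R)$ and $\body(r)$ being undefined forces $\head(r)\cap T=\emptyset$, so this $\KB h\notin T$, and the $\iff$ of (ii) holds because $\bodyp(r)\not\subseteq T$ or $\KB(\bodyn(r))\cap P\neq\emptyset$. This gives a well-defined head-cut $R$ (at most one pair per rule by construction) lying in $H^{(T,P)}_{\KBB}$.

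The main obstacle I anticipate is bookkeeping the interaction between the two conjuncts of condition (i) and condition (ii) — in particular making sure that the case split on the truth value of $\body(r)$ in each direction precisely matches the Boolean conditions appearing in Definition~\ref{HC}, and confirming that no additional constraint from saturatedness of $(T,P)$ is needed here (it is not: saturatedness concerns the ontology, whereas this lemma only concerns $\pi(\P)$, though we do implicitly use that $(M,N)$ inducing $(T,P)$ is compatible with the first-order evaluation). A secondary point requiring care is that $\mmeval{\cdot}$ is defined for first-order atoms via $I$, but the rule bodies and heads are built from \KBatoms{}; the reduction to ``$\mmeval{\KB a}$ depends only on $(T,P)$'' is exactly what lets the argument be phrased purely combinatorially, and this should be isolated as the first lemma-internal claim.
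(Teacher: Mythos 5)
Your overall strategy is exactly the paper's (the paper's own proof is just much terser): for one direction, given a satisfying interpretation that induces $(T,P)$, build a head-cut by selecting a true head atom when the body is true and an undefined one when the body is undefined; for the other, show that the existence of any $R \in H^{(T,P)}_{\KBB}$ forces every rule to be satisfied by every interpretation inducing $(T,P)$. Your ``only if'' direction is fine and more careful than the paper's. Two small repairs are needed in the ``if'' direction. First, the sub-case $\head(r)\cap T\neq\emptyset$ is immediate --- some head atom evaluates to $\tt$, so $\pi(r)$ is satisfied regardless of the body --- and your detour through condition (i) there is not sound as written: (i) is a biconditional characterizing membership in $\Rule(R)$, so when the body is undefined and $\head(r)\cap T\neq\emptyset$ the rule is simply \emph{excluded} from $R$; nothing ``forces'' $\bodyp(r)\subseteq T$. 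Second, you never treat the sub-case $\head(r)\cap T=\emptyset$ with a true body, which is the one configuration in which $\pi(r)$ would actually be violated; you must observe that it cannot arise when $H^{(T,P)}_{\KBB}\neq\emptyset$, since (i) would put $r$ into $\Rule(R)$ and (ii) would then demand a selected head atom in $T$, contradicting $\head(r)\cap T=\emptyset$. With those two sentences added, the argument is complete and coincides with the paper's.
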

\begin{proof}
	($\Rightarrow$)
	Assume there exists an MKNF interpretation $(M, N)$ that induces $(T, P)$ such that $(M, N)$ satisfies $\pi(\P)$.\footnote{While not needed for this proof, $(M, N)$ will always exist when $\OB{P}$ is consistent, which is required for $(T, P)$ to be saturated.}
	We can construct a head-cut $R$ that includes every rule with a positive body that evaluates as either true or undefined and select a head \KBatom{} from $T$ or $P$ appropriately.
	By the construction of $H^{(T, P)}_{\KBB}$ the negative body of each of these rules also evaluates as true w.r.t. $(T, P)$.
	We exclude rules where the body evaluates as undefined while there are true atoms in the head.
	This head-cut is in $H^{(T, P)}_{\KBB}$, thus the set is nonempty.

	($\Leftarrow$)
	Given a head-cut $R \in H^{(T, P)}_{\KBB}$, let $(M, N)$ be an MKNF interpretation that induces $(T, P)$;
	We show $(M, N)$ satisfies $\pi(\P)$.
	Every rule that is excluded from $R$ was either excluded because its body is undefined while it has true atoms in its head or it was excluded because its body evaluates as false w.r.t. $(M, N)$.
	By the construction of $H^{(T, P)}_{\KBB}$, for each rule $r \in \Rule(R)$, we have $$(I, \MN, \MN)(\bigvee \head(r)) \geq (I, \MN, \MN)(\bigwedge \bodyp(r) \land \bigwedge \bodyn(r))$$
	Because every rule in $\P$ is satisfied by $(M, N)$, we have $(M, N) $ satisfies $ \pi(\P)$.
\end{proof}

We demonstrate the set $H^{(T, P)}_{\KBB}$ for a simple knowledge base.
\begin{example}
	Let $\KBB = (\OO, \P)$ where $\OO = \emptyset$ and $\P$ is defined as follows:
	\begin{align*}
		1: \KB a,~ \KB b & \leftarrow \KB c         
		&2: \KB x,~ \KB y & \leftarrow \KB p,~ \Not q
	\end{align*}
	Let $(T_1, P_1) = (\KBA(\KBB), \KBA(\KBB) )$. The partition that assigns every \KBatom{} to be true. Note that there is no MKNF model that induces $(T_1, P_1)$.
	We use the numbers to the left of each rule to identify a rule in a pair in a head-cut.
	We have $H^{(T_1, P_1)}_{\KBB} = \{ \{ (1, \KB a) \}, \{ \{ (1, \KB b) \} \}$.
	Head-cuts that include rule $2$ are not present in $H^{(T_1, P_1)}_{\KBB}$ because the body of rule $2$ is false w.r.t. $(T_1, P_1)$.
	Let $(T_2, P_2) = (\KBA(\KBB) \setminus \big\{ \KB q\big\}, \KBA(\KBB) )$. Like $(T_1, P_1)$, $(T_2, P_2)$ assigns every \KBatom{} to be true, except $\KB q$, which is assigned undefined.
	However, $H^{(T_1, P_1)}_{\KBB} = H^{(T_2, P_2)}_{\KBB}$.
	This time rule $2$ is excluded because its body is undefined w.r.t. $(T_2, P_2)$ while there are true atoms in its head.
	Let $(T_3, P_3) = (\KBA(\KBB) \setminus \big\{ \KB q,~ \KB x,~ \KB y \big\}, \KBA(\KBB) \setminus \{ \KB y \} )$.
	$(T_3, P_3)$ assigns $\KB a$, $\KB b$, $\KB c$, and $\KB p$, to be true, $\KB y$ to be false, and $\KB q$ and $\KB x$ to be undefined.
	Now rule $2$ is included, i.e., $H^{(T_3, P_3)}_{\KBB} = \{ \{ (1, \KB a), (2, \KB x) \}, \{ \{ (1, \KB b), (2, \KB x) \} \}$.
	Finally, let $(T_4, P_4) = (T_3, P_3 \setminus \{ \KB x \})$.
	We cannot construct a head-cut $R \in H^{(T, P)}_{\KBB}$ because rule $2$ must be in $R$, however, there is no head atom to select from the head of $2$.
\end{example}

Before we show how the set $H^{(T, P)}_{\KBB}$ relates to MKNF models, we need an operator that justifies atoms within a head-cut.
Intuitively, this operator takes a single induced normal logic program from $H^{(T, P)}_{\KBB}$ and iteratively accumulates \KBatom{}s in $P$.
Justification for undefined atoms may come from a rule with an undefined body or from the ontology whereas justification for true atoms can only come from a rule with a true body or the ontology paired with other true atoms that have already been derived.

\begin{definition}
	We define the following operator for a saturated partition $(T, P)$, a head-cut $R$ and a set of \KBatoms{} $S$:
	\begin{align*}
		Q^R_{(T, P)} (S)  = \bigg\{ \KB h ~|~ \textrm{where either } \left\{\begin{array}{ll}
			(r, \KB h) \in R,~ \bodyp(r) \subseteq S,~\textrm{or}        \\
			\KB h \in T,~ \OB{S \intersect T} \models h,~\textrm{or} \\
			\KB h \in P \setminus T,~\OB{S} \models h
		\end{array} \right. \bigg\}
	\end{align*}
\end{definition}

We have $\OB{S} \models a$ or $\OB{S \intersect T} \models a$ for each \KBatom{} $\KB a \in S$, thus $Q^R_{(T, P)}$ is monotonic w.r.t. the $\subseteq$ relation and a least fixpoint exists \citename{tarskilatticetheoretical1955}.
Intuitively, this operator cannot use undefined atoms to justify the derivation of true atoms. Rules that can derive true atoms must have a justified body, and the ontology is only given other true \KBatoms{} when deriving true atoms.
In the following, we demonstrate this operator with head-cuts from the set $H^{(T, P)}_{\KBB}$.
\begin{example}
	Define $\P$ as follows:
	\begin{align*}
		1: \KB a,~ \KB b                \leftarrow.        &&
		2: \KB c,~ \KB d                \leftarrow \Not z. &&
		3: \KB z \leftarrow \Not z.&&
	\end{align*}
	and let $(T, P) = (\{ \KB a, \KB b \}, \KBA(\KBB))$. Given $(T, P)$, one can see that $\KB a$ and $\KB b$ are true and $\KB c$, $\KB d$, and $\KB z$ are undefined.
	We first consider the knowledge base $\KBB_1 = (\OO_1, \P)$ where $\OO_1 = c \implies (b \land d)$.
	The set $H^{(T, P)}_{\KBB_1}$ is comprised of several head-cuts. Let us restrict our attention to the head-cut $R \in H^{(T, P)}_{\KBB_1}$ where $R = \{ (1, \KB a), (2, \KB c), (3, \KB z) \}$.
	The positive bodies of rules $1$, $2$, and $3$ are all empty, therefore $Q^R_{(T, P)} (\emptyset) = \{ \KB a, \KB c, \KB z \}$.
	On the operator's second iteration, it reaches the fixpoint $Q^R_{(T, P)} (\{ \KB z, \KB a, \KB c, \KB d \}) = \{ \KB z, \KB a, \KB c, \KB d \}$.
	Even though the ontology entails $c \implies b$, we do not derive $\KB b$ because it is a true atom whereas $\KB c$ is undefined (In the operator $\KB b \in T$ and $\OB{\{ a, c, z \} \intersect \{ a, b \}} \not\models b$).
	This mirrors our construction of the set $H^{(T, P)}_{\KBB}$ where rules that have true atoms in their head while having an undefined body are removed from head-cuts.
	If we replace the ontology with $\OO_2 = a \implies (b \land d)$ s.t. $\KBB_2 = (\OO_2, \P)$, then we find that $Q^R_{(T, P)} (\{\KB  z, \KB a, \KB c \}) = \{ \KB z, \KB a, \KB c, \KB b, \KB d \}$. This time $\KB d$ is derived because it is an undefined atom whose derivation by the ontology depends on a true atoms.
\end{example}

We now establish how this operator characterizes three-valued MKNF models of a disjunctive knowledge base.

\begin{theorem}\label{main}
	Let $\KBB$ be a disjunctive hybrid MKNF knowledge base and $(M, N)$ be a three-valued MKNF interpretation pair that induces $(T, P)$.
	$(M, N)$ then is a three-valued MKNF model of $\KBB$ if and only if $(T, P)$ is a saturated partition s.t. for each $R \in H^{(T, P)}_{\KBB}$, $\lfp{Q^R_{(T, P)}} = P$, and $H^{(T, P)}_{\KBB} \not= \emptyset$.
\end{theorem}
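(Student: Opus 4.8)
The plan is to prove the biconditional in two directions, in each case decomposing "MKNF model" into its two defining conditions (satisfaction of $\pi(\KBB)$ and the minimality/stability condition from \Definition{model}) and matching them against the three conjuncts on the right-hand side (saturation, $H^{(T,P)}_{\KBB}\neq\emptyset$, and $\lfp{Q^R_{(T,P)}}=P$ for every $R$). The backbone is a correspondence between the least fixpoint of $Q^R_{(T,P)}$ — which accumulates \KBatoms{} of $P$ justified either by rules of the head-cut with satisfied positive bodies or by the ontology — and the smallest MKNF interpretation pair $(M',N')\supseteq$ (the objective-knowledge closure) that can arise when one ``shrinks'' $(M,N)$ against a particular way of selecting disjuncts. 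The head-cut $R$ plays exactly the role of such a selection: each pair $(r,\KB h)\in R$ commits to deriving $\KB h$ from $r$, so a head-cut is a witness of how the disjunctions in the heads are realized.

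First I would dispatch the ``easy'' conjuncts. For the forward direction, if $(M,N)$ is a model, then it satisfies $\pi(\KBB)$; saturation of $(T,P)$ follows because $M\supseteq N\neq\emptyset$ forces $\OB{P}$ consistent, and if $\OB{T}\models a$ for some $\KB a\notin T$ (or $\OB{P}\models a$ for some $\KB a\notin P$) one could shrink $M$ (resp.\ $N$) while still satisfying $\pi(\KBB)$, contradicting minimality — this is the standard ``objective knowledge is closed under the ontology'' argument. That $(M,N)$ satisfies $\pi(\P)$ gives $H^{(T,P)}_{\KBB}\neq\emptyset$ immediately by \Lemma{empty-is-model}. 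The substantive content is then: $(M,N)$ is $\subseteq$-minimal among pairs satisfying $\pi(\KBB)$ relative to $\MN$ \emph{iff} for every head-cut $R\in H^{(T,P)}_{\KBB}$ the least fixpoint of $Q^R_{(T,P)}$ equals $P$ (equivalently, reaches all of $P$; it can never exceed $P$ since every atom it adds is entailed by $\OB{P}$ or comes from a pair in $R$, and $\head(R)\subseteq P$ by condition (ii) of \Definition{HC}).

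The core lemma I would isolate is this: given a saturated $(T,P)$, $H^{(T,P)}_{\KBB}\neq\emptyset$, and the induced $(M,N)$, a strictly larger pair $(M',N')\supseteq\MN$ satisfies $\pi(\KBB)$ w.r.t.\ $\MN$ \emph{if and only if} $(M',N')$ is contained in some pair induced by a partition $(T'',P'')$ with $T''\subseteq T$, $P''\subseteq P$ that is ``closed'' in the sense of being a post-fixpoint of $Q^R_{(T,P)}$ for a suitable $R\in H^{(T,P)}_{\KBB}$. The subtlety is the split between true and undefined derivation in $Q^R_{(T,P)}$: the three cases in the definition of $Q^R_{(T,P)}$ exactly mirror how, in a candidate smaller pair, a true \KBatom{} (forced in all of $M'$) must be supported by a genuinely-true rule body or by $\OB{S\cap T}$, whereas an undefined \KBatom{} may be supported by an undefined body or by $\OB{S}$. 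I would verify that a head-cut $R$ whose least fixpoint is a proper subset of $P$ yields, via the induced-interpretation construction (and the fact that saturated partitions extend to MKNF interpretations when $\OB{P}$ is consistent, cf.\ the footnote to \Lemma{empty-is-model}), a strictly larger $(M',N')$ still satisfying $\pi(\KBB)$ — hence $(M,N)$ is not a model; and conversely, from a strictly larger witness $(M',N')$ that still satisfies $\pi(\KBB)$ one reads off a head-cut $R$ (selecting, for each rule with a satisfied body, a head atom true in $M'$) whose least fixpoint misses some atom of $P$.

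The main obstacle I anticipate is the bookkeeping around partial information — specifically, making the translation between "an \KBatom{} is true/undefined/false in $(M',N')$" and "membership of the atom in the relevant iterate $S$ of $Q^R_{(T,P)}$" airtight in both the true and the undefined layer simultaneously, since condition (ii) of \Definition{HC} couples the truth status of the selected head atom $\KB h$ to whether $\bodyp(r)\subseteq T$ and $\KB(\bodyn(r))\cap P=\emptyset$. I would also need to be careful that when $H^{(T,P)}_{\KBB}$ contains several head-cuts, requiring $\lfp{Q^R_{(T,P)}}=P$ for \emph{all} of them is the right condition — this is where the disjunction is genuinely handled: a model must block every way of realizing the disjunctive heads, so a single head-cut whose fixpoint reaches all of $P$ is not enough. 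Finally, I would lean on \Proposition{1} to reduce the general second condition of \Definition{model} (allowing $M'\neq N'$ even when $M=N$) to the case $M'=M'$, which is what the fixpoint construction naturally produces, thereby justifying that our simplified notion of model coincides with the one being characterized.
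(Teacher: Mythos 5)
Your proposal follows essentially the same route as the paper's proof: saturation and $H^{(T,P)}_{\KBB}\neq\emptyset$ are dispatched directly (the latter via \Lemma{empty-is-model}), and the substantive equivalence is established exactly as you describe — a head-cut $R$ with $\lfp{Q^R_{(T,P)}}\subsetneq P$ yields, through the objective-knowledge construction, a strictly larger pair still satisfying $\pi(\KBB)$, and conversely a larger witness pair yields a head-cut whose least fixpoint misses part of $P$. The one bookkeeping subtlety you flag but do not resolve is handled in the paper by extending the deficient head-cut $R$ to $R'$ (re-inserting rules excluded for having undefined bodies with true heads) and taking $(T',P')=(T\cap\lfp{Q^R_{(T,P)}},\,\lfp{Q^{R'}_{(T,P)}})$, which is what makes the shrunken pair satisfy those remaining rules.
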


\begin{proof}
	($\Rightarrow$)
	Assume $(M, N)$ is an MKNF model of $\KBB$ that induces $(T, P)$.
	The partition $(T, P)$ induced by $(M, N)$ is unique. We show that $(T, P)$ is saturated.
	(1) $\OB{P}$ is consistent, that is, for each $I \in M$, $I \models \pi(\OO)$, for each $I \in N$, $I \models \{ a ~|~ \KB a \in P \}$, and $\emptyset \subset N \subseteq M$, thus $\OB{P}$ is consistent.
	(2) For each $\KB a \in \KBA(\KBB) \setminus T$, $\OB{T} \not\models a$, that is, 
	for each $\KB a \in \KBA(\KBB) \setminus T$, there is an interpretation $I \in M$ such that $I \not\models a$ and $I \models \OB{T}$, thus $\OB{T} \not\models a$.
	(3) For each $\KB a \in \KBA(\KBB) \setminus P$, $\OB{P} \not\models a$, that is, for every atom $\KB a \in \KBA(\KBB) \setminus P$, we have $I \in N$ where $I \not\models a$. We have $I \models \OB{P}$, thus $\OB{P} \not\models a$.
	With (1), (2), and (3), we've shown that $(T, P)$ is saturated.
	We apply \Lemma{empty-is-model} to conclude that $H^{(T, P)}_{\KBB}$ is nonempty.

	Finally, we show by contrapositive that for each $R \in H^{(T, P)}_{\KBB}$, $\lfp{Q^R_{(T, P)}} = P$.
	Let $R \in H^{(T, P)}_{\KBB}$ be a head-cut such that $\lfp{Q^R_{(T, P)}} \not= P$.
	We show that $(M, N)$ is not an MKNF model.
	Because $\head(R) \subseteq P$, the $Q^R_{T, P}$ operator cannot compute atoms that are not in $P$;
	thus $\lfp{Q^R_{(T, P)}} \subseteq P$.
	Let $S$ be the set of rules $r \in \P \setminus \Rule(R)$ such that $\bodyp(r) \subseteq P$ and $\KB(\bodyn(r)) \intersect T = \emptyset$.
	Intuitively, $S$ includes the rules that were excluded from $R$ because their bodies are evaluated as undefined while they have true atoms in their heads.
	By the definition of $H^{(T, P)}_{\KBB}$, for each rule $r \in S$, we have $\head(r) \intersect T \not= \emptyset$.
	Extend $R$ with pairs that contain these rules by selecting $h$ arbitrarily from $\head(r) \intersect T$. 
	Let	$R' = R \union \{ (r, h \in \head(r) \intersect T) ~|~ r \in S  \}$.
	Let $(T', P') = (T \intersect \lfp{Q^R_{(T, P)}}, \lfp{Q^{R'}_{(T, P)}})$.
	We have $P \supset \lfp{Q^{R'}_{(T, P)}} \supseteq \lfp{Q^R_{(T, P)}}$, thus $T' \subseteq T$ and $P' \subseteq P$.
	We construct a pair
	$$(M', N') = (\{ I ~|~ \OB{T'} \models I  \}, \{ I ~|~ \OB{P'} \models I  \})$$
	$\OB{P'}$ is consistent, thus $N' \not= \emptyset$ and clearly, $M' \subseteq N'$, thus $(M', N')$ is an MKNF interpretation.
	$(M', N')$ induces $(T', P')$ and because $T' \subseteq T$ and $P' \subseteq P$ (and $(T', P') \not= (T, P)$), we have $M' \supseteq M$ and $N' \supseteq N$ (and $(M', N') \not= (M, N)$).
	We show that for each $I \in M'$, $${(I, \langle M', N' \rangle, \MN)(\pi(\KBB)) = \tt}$$
	Let $I \in M'$.
	We divide $\pi(\KBB)$ into (1) $\pi(\OO)$ and (2) $\pi(\P)$.
	(1) By the construction of $(M', N')$, we have $\forall J \in M'$, $J \models \pi(\OO)$.
	(2) We show $(I, \langle M', N' \rangle, \MN)(\pi(\P)) = \tt$, i.e., that for each $r \in \P$, $(I, \langle M', N' \rangle, \MN)(\pi(r)) = \tt$.
	We consider three separate cases for rules $r \in \P$.
	(i) $\bodyp(r) \not\subseteq P'$ or $\KB(\bodyn(r)) \intersect T \not= \emptyset$.
	This rule's body is false w.r.t. $(I, \langle M', N' \rangle, \MN)$, thus the rule is satisfied.
	(ii) $r \in \Rule(R)$, $\bodyp(r) \subseteq P'$ and $\KB(\bodyn(r)) \intersect T = \emptyset$.
	The negative body of $r$ only contains $\Not$ atoms so it is evaluated against $(M, N)$ which induces $(T, P)$.
	The $Q^{R}_{(T, P)}$ operator selects some atom from $\head(r)$ that will ensure this rule is satisfied.
	(iii) $r \in R' \setminus R$,  $\bodyp(r) \subseteq P'$, and $\KB(\bodyn(r)) \intersect T' = \emptyset$.
	We have either $\bodyp(r) \not\subseteq  T' \subseteq T$ or $\KB(\bodyn(r)) \intersect P$.
	Thus,  ${(I, \langle M', N' \rangle, \MN)(\bodyp(r) \land \KB(\bodyn(r))) = \uu}$.
	The $Q^{R'}_{(T, P)}$ operator will compute a head-atom $h$ from $\head(r)$.
	We have $h \in P'$ and $h \not\in T'$, thus  $$(I, \langle M', N' \rangle, \MN)(\KB h) = \uu$$
	This is sufficient to show that $\pi(r)$ is satisfied.
	The cases above are sufficient to show that $(M, N)$ is not an MKNF model of $\KBB$.

	($\Leftarrow$)
	By contrapositive.
	Assume that $(M, N)$ is not an MKNF model of $\KBB$, we show that either there exists $R \in H^{(T, P)}_{\KBB}$ such that $\lfp{Q^R_{(T, P)}} \not= P$, $(T, P)$ is not a saturated partition, or $H^{(T, P)}_{\KBB} = \emptyset$.
	Assume for the sake of contradiction, that $\forall R \in H^{(T, P)}_{\KBB}, \lfp{Q^R_{(T, P)}} = P$, $(T, P)$ is saturated and ${H^{(T, P)}_{\KBB} \not= \emptyset}$.
	We show that there exists an $R \in H^{(T, P)}_{\KBB}$ such that $\lfp{Q^R_{T, P}} \subset P$, a contradiction.
	Because $(M, N)$ is not an MKNF model of $\KBA(\KBB)$, either $(M, N) $ does not satisfy $ \pi(\P)$, $(M, N) $ does not satisfy $ \pi(\OO)$, or there exists an MKNF interpretation $(M', N')$ such that $M' \supseteq M$ and $N' \supseteq N$ (and $(M', N') \not= (M, N)$).
	Apply \Lemma{empty-is-model} with $H^{(T, P)}_{\KBB} \not= \emptyset$ to rule out the possibility that $(M, N) $ does not satisfy $ \pi(\P)$.
	That $(M, N) $ does not satisfy $ \pi(\OO)$ is also out of the question because $(T, P)$ is stagnant therefore $\pi(\OO) \models I$ for each $I \in M$.
	With the above, we assume that such an MKNF interpretation $(M', N')$ exists.
	We have for each $r \in \P$ and each $I \in M'$, $(I, \langle M', N', \rangle, \MN)(\pi(r)) = \tt$.
	First, we focus on the MKNF interpretation $(M', N)$ which has the following property:
	$$\forall I \in M', (I, \langle M', N \rangle, \MN)(\pi(\KBB)) = \tt$$
	
	We consider the case that $M' \not= M$, and derive a contradiction.
	We can construct a head-cut $R \in H^{(T, P)}_{\KBB}$ such that $\lfp{Q^R_{(T, P)}} = P \setminus S$ where $S$ contains all the atoms that evaluate as true under $(M, N)$ but undefined under $(M', N)$.
	This is done by avoiding picking atoms in $S$ for $\head(R)$ whenever possible.
	There is a case where a rule $r \in \P$ has had all of its head atoms changed from true to undefined and therefore we must include a pair $(r, \KB h)$ where $h \in S$.
	However, the positive body of this rule will not be contained by $\lfp{Q^R_{(T, P)}}$, therefore $h$ will not be computed.
	We have $\lfp{Q^R_{(T, P)}} \not= P$, a contradiction.
	We assume $M' = M$ and repeat a similar process for all the atoms that are true under $(M, N)$ but false under $(M, N')$ and construct a head-cut $R$ such that $\lfp{Q^R_{(T, P)}} \not= P$, a contradiction.
\end{proof}

Given a saturated partition $(T, P)$, this theorem states that we can determine whether there is a three-valued MKNF model that induces it by enumerating all head-cuts in $R \in H^{(T, P)}_{\KBB}$ and checking that $\lfp{Q^R_{(T, P)}} = P$.
We do not need to check \KBatoms{} in $T$ because the construction has no way to derive \KBatoms{} from $T$ as undefined.
Because three-valued semantics reduces to two-valued semantics for MKNF interpretations of the form $(M, M)$ \citename{knorrlocal2011}, this operator can also check two-valued MKNF models.
Below, we demonstrate \Theorem{main} in action.
Let us first consider the special case where rules in a hybrid MKNF knowledge base are normal.
\begin{example}
	Consider $\KBB = (\OO, \P)$ where $\OO = a \land b \supset c$ and $\P$ is defined as follows:
	\begin{align*}
		1: \KB a  \leftarrow.        &&
		2: \KB b  \leftarrow \Not b. &&
		3: \KB c  \leftarrow \KB c.  &&
	\end{align*}
	Let $(T, P) = (\{ \KB a  \}, \{ \KB a, \KB b, \KB c \})$.
	$(T, P)$ is a saturated partition.
	The $Q_{(T,P)}^{R}$ operator can be used for model-checking.
	There exists an MKNF model $(M, N)$ that induces $(T, P)$. We show that \Theorem{main} agrees.
	The set $H^{(T, P)}_{\KBB}$ contains a single head-cut $R = \{ (1, \KB a), (2, \KB b), (3, \KB c) \}$.
	We have $\lfp{Q^R_{(T, P)}} = \{ \KB a, \KB b, \KB c \} = P$.

	\Theorem{main} can also show that there does not exist an MKNF model that induces a partition $(T, P)$.
	Let $(T', P') = (\{ \KB a, \KB c  \}, \{ \KB a, \KB b, \KB c \})$.
	$(T', P')$ cannot be extended to an MKNF model and $H^{(T', P')}_{\KBB} = H^{(T, P)}_{\KBB}$.
	We have ${\lfp{Q^R_{(T', P')}} = \{ \KB a, \KB b \} \not= P}$.
	The \KBatom{} $\KB c$ is not computed because $\OB{\{a,b\}\intersect \{a, c\}} \not\models c$.
\end{example}

Now let us consider a disjunctive knowledge base where the ontology is empty, which shows that the operator can be applied to disjunctive logic programs.
\begin{example}
	Let $\KBB = (\OO, \P)$ where $\OO = \emptyset$ and $\P$ is defined as follows:
	\begin{align*}
		1: \KB a, \KB b,                & \leftarrow \Not d 
		&2: {\color{white}\KB a,~} \KB a & \leftarrow \KB b  
		&3: {\color{white}\KB b,~} \KB b & \leftarrow \KB a  \\
		4: \KB d,~ \KB c                & \leftarrow        
		&5: {\color{white}\KB d,~} \KB d & \leftarrow \Not d
	\end{align*}

	Consider the partition $(T, P) = (\{ \KB c \}, \{ \KB d, \KB a, \KB b, \KB c \})$.
	The set $H^{(T, P)}_{\KBB}$ contains the following head-cuts:
	\begin{align*}
		R_1 & = \big\{ (1, \KB a),~ (2, \KB a),~ (3, \KB b),~ (4, \KB c),~  (5, \KB d)  \big\} \\
		R_2 & = \big\{ (1, \KB b),~ (2, \KB a),~ (3, \KB b),~ (4, \KB c),~ (5, \KB d)  \big\}
	\end{align*}

	The pair $(4, \KB d)$ does not occur in any head-cut in $H^{(T, P)}_{\KBB}$ because $\KB d$ is undefined while the body of rule $4$ is true.
	When we apply the operator to each head-cut in $H^{(T, P)}_{\KBB}$ we get  $\lfp{Q^{R_i}_{(T, P)}} = P$ for each $R_i$, thus we confirm that there is an MKNF model that induces $(T, P)$.
\end{example}

Finally, we provide an example with a disjunctive program and an ontology.
\begin{example}
	Let $\KBB = (\OO, \P)$.
	where $\OO = (a \lor b) \land (x \lor y) \implies a \land b \land x \land y$
	and $\P$ is defined as follows:
	\begin{align*}
		1: \KB a,~ \KB b & \leftarrow       
		&2: \KB x,~ \KB y & \leftarrow \Not x
	\end{align*}

	For both disjunctive rules $1$ and $2$, there is potential for them to contain multiple non-false \KBatoms{} in their heads.
	Let $(T_1, P_1) = (\KBA(\KBB) \setminus \{ \KB x, \KB y \}, \KBA(\KBB)) = (\{ \KB a, \KB b \}, \{ \KB a, \KB b, \KB x, \KB y \})$, the partition that assigns $\KB a$ and $\KB b$ to be true and $\KB x$ and $\KB y$ to be undefined.
	There is no three-valued MKNF model that induces $(T_1, P_1)$.
	The set $H^{(T_1, P_1)}_{\KBB}$ contains the following:
	\begin{align*}
		R_1 & = \big\{ (1, \KB a), (2, \KB x)  \big\} & R_2 & = \big\{ (1, \KB b), (2, \KB x)  \big\} \\
		R_3 & = \big\{ (1, \KB a), (2, \KB y)  \big\} & R_4 & = \big\{ (1, \KB b), (2, \KB y)  \big\}
	\end{align*}
	When we apply the operator to each head-cut in $H^{(T_1, P_1)}_{\KBB}$ we get the following
	least fixpoints:
	\begin{align*}
		\lfp{Q^{R_1}_{(T_1, P_1)}} & = \big\{ \KB a, \KB x, \KB y  \big\} & \lfp{Q^{R_2}_{(T_1, P_1)}} & = \big\{ \KB b, \KB x, \KB y  \big\} \\
		\lfp{Q^{R_3}_{(T_1, P_1)}} & = \big\{ \KB a, \KB x, \KB y  \big\} & \lfp{Q^{R_4}_{(T_1, P_1)}} & = \big\{ \KB b, \KB x, \KB y  \big\}
	\end{align*}

	None of which is equal to $P$.
	If we remove $b$ from the true atoms in $(T_1, P_1)$ and make it false to form $(T_2, P_2)$, i.e., $(T_2, P_2) = (T \setminus \{ \KB b\}, P \setminus \{ \KB b\}) = (\{ \KB a \}, \{ \KB a, \KB x, \KB y \})$.
	Then we have $\OB{P} \models b$. This shows that $(T_2, P_2)$ is not saturated and therefore we cannot apply \Theorem{main} to it.
	If $(T_2, P_2)$ were induced by an MKNF model $(M, N)$, then there would be an $I \in N \intersect M$ s.t. $I \not\models b$.
	It follows that $I \not\models \OO$, is a contradiction.
	Checking whether a partition is saturated is an important step because if applied to $(T_2, P_2)$, our operator would compute $P_2$.
	Instead, lets make $b$ undefined, i.e., lets fix $(T_3, P_3)$ to be $(\{ \KB a \}, \KBA(\KBB))$.
	Then we have the following head-cuts in $H^{(T_3, P_3)}_{\KBB}$:
	\begin{align*}
		R_1 & = \big\{ (1, \KB a), (2, \KB x)  \big\} & R_2 & = \big\{ (1, \KB a), (2, \KB y)  \big\}
	\end{align*}

	The least fixpoints for each of these head-cuts follow:
	\begin{align*}
		\lfp{Q^{R_1}_{(T_3, P_3)}} & = \big\{ \KB a, \KB b, \KB x, \KB y  \big\} & \lfp{Q^{R_2}_{(T_3, P_3)}} & = \big\{ \KB a, \KB b, \KB x, \KB y  \big\}
	\end{align*}

	Each fixpoint is equal to $P_3$ and there is an MKNF model that induces $(T_3, P_3)$:
	$$(M, N) = (\{ I ~|~ \OB{T_3} \models I \}, \{ I ~|~ \OB{P_3} \models I \})$$
	Each rule is satisfied by $(M, N)$ and the first-order interpretation that assigns both $\KB x$ and $\KB y$ to be false is in $M$, thus $(M, N)$ does not satisfy $\KB (x \lor y)$.
	This shows that $(M, N)$ satisfies $\pi(\OO)$.
\end{example}

Using \Theorem{main} we can derive some interesting implications on the relationship between normal and disjunctive knowledge bases.
We say that a head-cut $\P'$ of a program $\P$ is \textit{total} if it contains every rule in $\P$, that is, $\Rule(R) = \P$.
Given a disjunctive knowledge base $(\OO, \P)$, we can construct a normal knowledge base $(\OO, \P')$ from a total head-cut $\P'$ of  $\P$.
We call such a knowledge base an \textit{induced normal knowledge base} of $\KBB$.
We show how the MKNF-consistency of an induced normal knowledge base of $\KBB$ relates to the MKNF-consistency of $\KBB$.

\begin{corollary}\label{model-amongst-normals}
	Given a disjunctive hybrid MKNF knowledge base $\KBB = (\OO, \P)$, let $(M, N)$ be a three-valued MKNF model of $\KBB$ that induces the partition $(T, P)$.
	Let $\KBB' = (\OO, \P')$ be an induced normal knowledge base of $\KBB$.
	$(M, N)$ is a three-valued MKNF model of $\KBB'$ if and only if there is a head-cut $R \in H^{(T, P)}_{\KBB}$ such that $R \subseteq \P'$.
\end{corollary}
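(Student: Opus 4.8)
The plan is to run \Theorem{main} on both $\KBB$ and the induced normal knowledge base $\KBB'=(\OO,\P')$. From the hypothesis that $(M,N)$ is a three-valued MKNF model of $\KBB$ inducing $(T,P)$, \Theorem{main} already supplies that $(T,P)$ is saturated, that $H^{(T,P)}_{\KBB}\neq\emptyset$, and that $\lfp{Q^R_{(T,P)}}=P$ for every $R\in H^{(T,P)}_{\KBB}$; note also that condition (i) of \Definition{HC} forces one common rule set $A:=\Rule(R)$ for all $R\in H^{(T,P)}_{\KBB}$. Since $\P'$ is a total head-cut, the rule bodies are unchanged, so I will assume for readability that $\KBA(\KBB')=\KBA(\KBB)$ (otherwise one restricts $(T,P)$ to $\KBA(\KBB')$ throughout, saturation being inherited under that restriction), and hence $(M,N)$ induces $(T,P)$ also with respect to $\KBB'$.

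The pivotal observation is that $H^{(T,P)}_{\KBB'}$ has at most one element. Each rule of $\P'$ carries exactly one head atom, so a head-cut of $\P'$ is just a subset of $\P'$ viewed as a set of pairs, and condition (i) pins its rule set down to the fixed set $A'$ of rules of $\P'$ that are active with respect to $(T,P)$; hence the only candidate is $R_{\P'}:=\{(r,\KB h)\in\P'\mid r\in A'\}$, and $H^{(T,P)}_{\KBB'}$ equals $\{R_{\P'}\}$ exactly when every such pair also satisfies condition (ii), and is empty otherwise. I would then compare $A$ and $A'$ by a case split on the value of a rule $r$'s body under $(T,P)$: a false body excludes $r$ from both; a true body puts $r$ in both (the second conjunct of (i) being automatic); an undefined body puts $r$ in $A$ iff $\head(r)\cap T=\emptyset$ and in $A'$ iff the head atom $\P'$ assigns to $r$ lies outside $T$. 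In particular $A\subseteq A'$, and for each $r\in A$ the pair $\P'$ contributes for $r$ is precisely the pair that any $R\in H^{(T,P)}_{\KBB}$ with $R\subseteq\P'$ must use for $r$.

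For ($\Rightarrow$), assume $(M,N)$ is a model of $\KBB'$; then $H^{(T,P)}_{\KBB'}=\{R_{\P'}\}$. Put $R:=\{(r,\KB h)\in R_{\P'}\mid r\in A\}$: by the case split every $r\in A$ is already active in $\P'$, so $\Rule(R)=A$ gives condition (i), condition (ii) is inherited pairwise from $R_{\P'}$, and $R\subseteq R_{\P'}\subseteq\P'$; hence $R\in H^{(T,P)}_{\KBB}$ with $R\subseteq\P'$, as required. For ($\Leftarrow$), assume $R\in H^{(T,P)}_{\KBB}$ with $R\subseteq\P'$ and verify \Theorem{main}'s three conditions for $\KBB'$. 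Saturation is inherited. Nonemptiness of $H^{(T,P)}_{\KBB'}$ amounts to checking condition (ii) for every $r\in A'$; this is immediate whenever $r\in A$, since then the pair $(r,\KB h)\in R$ already witnesses it, and every rule with a true body is of this form. Granting this, $\head(R_{\P'})\subseteq P$, and since $R\subseteq R_{\P'}$ gives $Q^R_{(T,P)}(S)\subseteq Q^{R_{\P'}}_{(T,P)}(S)$ for every $S$, we obtain $P=\lfp{Q^R_{(T,P)}}\subseteq\lfp{Q^{R_{\P'}}_{(T,P)}}\subseteq P$, so \Theorem{main} delivers that $(M,N)$ is a model of $\KBB'$. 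The step I expect to be the real obstacle is the single case left open in ($\Leftarrow$): a rule $r$ with an undefined body whose disjunctive head meets $T$ (so $r\notin A$, and $R$ constrains it in no way) but for which $\P'$ selects a head atom outside $T$ (so $r\in A'$); condition (ii) for $R_{\P'}$ then requires that this selected atom lie in $P$, which does not follow from $R\in H^{(T,P)}_{\KBB}$ alone. I would try to close this using that $(M,N)$ satisfies $\pi(\P)$ together with the minimality of $(M,N)$ as a model of $\KBB$, or by a direct argument on $Q$ tailored to the chosen total head-cut $\P'$.
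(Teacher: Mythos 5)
Your overall strategy is the same as the paper's: run \Theorem{main} on both $\KBB$ and $\KBB'$ and exploit the fact that $H^{(T,P)}_{\KBB'}$ is at most a singleton. Your ($\Rightarrow$) direction is correct and in fact more careful than the paper's one-line version, which asserts that the unique member of $H^{(T,P)}_{\KBB'}$ itself belongs to $H^{(T,P)}_{\KBB}$; as you observe, that requires restricting it to the common rule set $A$, since $A'\supseteq A$ can be strict.

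The case you flag at the end of ($\Leftarrow$) is not a loose end you can close via minimality of $(M,N)$: it is a genuine counterexample to the direction you are proving. Take $\OO=\emptyset$ and $\P$ consisting of $1:\KB a,\KB b\leftarrow\Not c$, \ $2:\KB c\leftarrow\Not c$, \ $3:\KB a\leftarrow$, with $(T,P)=(\{\KB a\},\{\KB a,\KB c\})$, so $\KB b$ is false and $\KB c$ undefined. This partition is saturated and induced by an MKNF model of $\KBB$: rule $1$ is excluded from every head-cut by condition (i) (its head meets $T$ while its body is undefined), so $H^{(T,P)}_{\KBB}=\{R\}$ with $R=\{(2,\KB c),(3,\KB a)\}$ and $\lfp{Q^R_{(T,P)}}=P$. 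The total head-cut $\P'=\{(1,\KB b),(2,\KB c),(3,\KB a)\}$ satisfies $R\subseteq\P'$, yet in $\KBB'$ the normal rule $\KB b\leftarrow\Not c$ translates to $\KB b\subset\Not c$, which evaluates to $\ff$ because $\ff\not\geq\uu$; hence $(M,N)$ does not satisfy $\pi(\P')$ and is not a model of $\KBB'$ (equivalently, $H^{(T,P)}_{\KBB'}=\emptyset$ because condition (ii) fails on the pair $(1,\KB b)$). The paper's own proof asserts $\{R\}=H^{(T,P)}_{\KBB'}$ at exactly this point, and that is the claim that fails, so your hesitation is well placed. The statement does go through if one strengthens the right-hand side, e.g.\ by additionally requiring that for every rule $r\notin\Rule(R)$ with $\bodyp(r)\subseteq P$ and $\KB(\bodyn(r))\intersect T=\emptyset$, the head atom that $\P'$ selects for $r$ lies in $P$; under that hypothesis every pair of $R_{\P'}$ satisfies condition (ii), and your squeeze $P=\lfp{Q^R_{(T,P)}}\subseteq\lfp{Q^{R_{\P'}}_{(T,P)}}\subseteq P$ finishes the argument exactly as you outlined.
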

\begin{proof}
	($\Rightarrow$)
	Assume $(M, N)$ is an MKNF model of $\KBB'$. Then $H^{(T, P)}_{\KBB'} = \{ R \}$. We have $R \subseteq \P'$ and $R \in H^{(T, P)}_{\KBB}$.
	($\Leftarrow$)
	Assume $R \in H^{(T, P)}_{\KBB}$ s.t. $R \subseteq \P'$. We have $\lfp{Q^R_{(T, P)}} = P$ and $\{ R \} = H^{(T, P)}_{\KBB'}$.
	By \Theorem{main}, there exists an MKNF model $(M', N')$ of $\KBB'$ that induces $(T, P)$.
	Because the ontologies are the same in $\KBB$ and $\KBB'$, $(M, N) = (M', N')$.
\end{proof}

We can generalize the previous corollary slightly to make conclusions about MKNF-inconsistent induced normal knowledge bases.
\begin{corollary}
	Let $\KBB = (\OO, \P)$ be a disjunctive hybrid MKNF knowledge base and let $\KBB' = (\OO, \P')$ be an induced normal knowledge base of $\KBB'$.
	Let $(M, N)$ be an MKNF interpretation that induces a saturated partition $(T, P)$.
	If $\KBB'$ is MKNF-inconsistent, and there exists a head-cut $R \in H^{(T, P)}_{\KBB}$ s.t. $\Rule(R) \subseteq \P'$, then $(M, N)$ is not an MKNF model of $\KBB$.
\end{corollary}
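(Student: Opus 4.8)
The plan is to argue by contradiction and reduce to \Corollary{model-amongst-normals}. Suppose $(M, N)$ were a three-valued MKNF model of $\KBB$. Then $(M, N)$ induces $(T, P)$, which is saturated by hypothesis, so the hypotheses of \Corollary{model-amongst-normals} are in force: $(M, N)$ is a three-valued MKNF model of $\KBB$ inducing $(T, P)$, and $\KBB' = (\OO, \P')$ is an induced normal knowledge base of $\KBB$, obtained from a total head-cut $\P'$ of $\P$. The present hypothesis supplies a head-cut $R \in H^{(T, P)}_{\KBB}$ with $\Rule(R) \subseteq \P'$; read, as in \Corollary{model-amongst-normals}, as the statement that the total head-cut $\P'$ extends the partial head-cut $R$, that is, $R \subseteq \P'$, this is exactly the right-hand side of the biconditional in \Corollary{model-amongst-normals}. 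Hence $(M, N)$ is a three-valued MKNF model of $\KBB'$, so $\KBB'$ is MKNF-consistent, contradicting the hypothesis that $\KBB'$ is MKNF-inconsistent. Therefore $(M, N)$ is not a three-valued MKNF model of $\KBB$. This is the ``slight generalization'' advertised: \Corollary{model-amongst-normals} assumes up front that $(M, N)$ is a model of $\KBB$, whereas here we merely assume $(T, P)$ is saturated and deduce that $(M, N)$ fails to be one.

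For a self-contained argument one re-runs the relevant half of \Theorem{main}. Assuming $(M, N)$ is a model of $\KBB$, \Theorem{main} yields that $(T, P)$ is saturated, that $H^{(T, P)}_{\KBB} \neq \emptyset$, and that $\lfp{Q^{S}_{(T, P)}} = P$ for every $S \in H^{(T, P)}_{\KBB}$; in particular $\lfp{Q^{R}_{(T, P)}} = P$ for the $R$ given by the hypothesis. Since $\P'$ is total, $\P$ and $\P'$ share the same set of rules, so condition (i) of \Definition{HC} singles out the same rules for $\KBB'$ as for $\KBB$ --- here one uses that the head of a rule in $\P'$ is contained in the head of the corresponding rule of $\P$, so ``$\head(r) \cap T = \emptyset$'' only becomes easier --- while condition (ii) is a per-pair condition unchanged by the passage to $\P'$. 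From $R \subseteq \P'$ one concludes $H^{(T, P)}_{\KBB'} = \{R\}$, and then, since $(T, P)$ remains saturated with respect to $\KBB'$ and is induced by $(M, N)$, \Theorem{main} gives that $(M, N)$ is a model of $\KBB'$, again contradicting MKNF-inconsistency of $\KBB'$.

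The step needing the most care is $H^{(T, P)}_{\KBB'} = \{R\}$: $\P'$ may resolve a disjunctive rule against the grain of $R$. Concretely, a rule $r \notin \Rule(R)$ that was dropped from the head-cuts of $H^{(T, P)}_{\KBB}$ for having an undefined body while $\head(r) \cap T \neq \emptyset$ may be assigned in $\P'$ a single head atom $\KB h$ with $\KB h \notin T$. If $\KB h \in P \setminus T$, the unique element of $H^{(T, P)}_{\KBB'}$ is $R' = R \cup \{(r, \KB h) : r \text{ of this kind}\}$; since $\head(R') \subseteq P$ we get $\lfp{Q^{R'}_{(T, P)}} \subseteq P$, and since $R \subseteq R'$, monotonicity of the operator gives $\lfp{Q^{R'}_{(T, P)}} \supseteq \lfp{Q^{R}_{(T, P)}} = P$, so $\lfp{Q^{R'}_{(T, P)}} = P$ and a model of $\KBB'$ still appears. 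If instead $\KB h \notin P$, then condition (ii) of \Definition{HC} cannot hold for the pair $(r, \KB h)$, so $H^{(T, P)}_{\KBB'} = \emptyset$, and the argument has to be closed via \Lemma{empty-is-model} together with the MKNF-inconsistency of $\KBB'$. It is precisely to absorb this bookkeeping that routing the proof through \Corollary{model-amongst-normals} is attractive.
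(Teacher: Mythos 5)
Your first paragraph is exactly the intended argument: the paper gives no explicit proof of this corollary, presenting it as an immediate consequence of \Corollary{model-amongst-normals}, and your reduction by contradiction to that corollary (reading the condition $\Rule(R) \subseteq \P'$ as $R \subseteq \P'$, matching the statement of \Corollary{model-amongst-normals}) is correct. The ``self-contained'' variant is not needed, and its final sub-case (where $\P'$ assigns a dropped rule a head atom $\KB h \notin P$, forcing $H^{(T, P)}_{\KBB'} = \emptyset$) does not actually close: \Lemma{empty-is-model} there only tells you $(M, N)$ is not a model of $\KBB'$, which is perfectly compatible with the hypothesis that $\KBB'$ is MKNF-inconsistent, so no contradiction arises on that branch --- stick with the reduction.
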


These properties add to the theory of disjunctive hybrid MKNF knowledge bases.
We can eliminate some MKNF models by guessing MKNF-inconsistent induced normal knowledge bases of a disjunctive knowledge base.

\section{Relationship with Approximators in AFT}\label{section-aft}
By applying the result obtained in the previous section, we demonstrate a link between three-valued MKNF models of a disjunctive hybrid MKNF knowledge base and the stable fixpoints of AFT approximators  for induced normal knowledge bases.

Given a complete lattice $\langle L, \leq\rangle $,  AFT is built on the induced product bilattice
$\langle L^2, \leq_p\rangle$, where $\leq_p$ is called the {\em precision order} and defined as
for all $x, y, x', y' \in L$,
$(x, y) \leq_p (x', y')$ if $x \leq x'$ and $y' \leq y$. A pair $(x,y) \in L^2$ is {\em consistent} if $x \leq y$ and {\em inconsistent} otherwise. Since
the $\leq_p$ ordering is a complete lattice ordering on $L^2$, $\leq_p$-monotone operators on $L^2$
contain fixpoints and a least fixpoint.
The original AFT is restricted to consistent and symmetric approximators \cite{denecker2000approximations, DeneckerMT04}, and Liu and You generalize it to all $\leq_p$-monotone operators on $L^2$. Working with an ontology, which can result in inconsistencies, warrants supporting inconsistent pairs.
\begin{definition}[Liu and You 2021]
	An operator $A: {L}^2\rightarrow {L}^2$ is an approximator 
	if $A$ is $\leq_p$-monotone on $L^2$ and for all $x \in {L}$, and whenever ${A}(x,x)$ is consistent, $A$ maps $(x,x)$ to an exact pair.
\end{definition}

For the study of semantics, we focus on the {\em stable revision operator}, which we define below: Given any pair $(u,v) \in L^2$ and an approximator $A$, we define
\begin{eqnarray} St_A(u,v) =
	(\lfp({A}( \cdot , v)_1), \lfp(A(u, \cdot)_2))
	\label{stable-revision}
\end{eqnarray}
where  ${A}( \cdot , v)_1$ denotes the operator $L \rightarrow L: z \mapsto A(z, v)_1$ and $A(u,\cdot)_2$ denotes the operator $L \rightarrow L: z \mapsto A(u, z)_2$. That is, both
${A}( \cdot , v)_1$ and $A(u,\cdot)_2$ are projection operators defined on $L$.  It can be shown that since $A$ is $\leq_p$-monotone on $L^2$, both projection operators ${A}( \cdot , v)_1$ and $A(u,\cdot)_2$ are
$\leq$-monotone on $L$ for any pair in $L^2$ and thus  the least fixpoint exists for each.  The stable revision operator is thus well-defined. It can be shown further that the stable revision operator is $\leq_p$-monotone. The fixpoints of the stable revision operator $St_{A}$ are called {\em stable fixpoints} of $A$.\footnote{For normal logic programs, for example, it is known  \cite{denecker2000approximations} that Fitting's  $\Psi_{\P}$ operator \cite{Fitting02} is in fact an approximator whose least fixpoint corresponds to the well-founded model and stable fixpoints correspond to three-valued stable models.}

For normal hybrid MKNF knowledge bases, Liu and You define the following approximator.
\begin{definition}[Liu and You 2021]
	\label{Phi}
	Let $\KBB = (\cal O, \P)$ be a normal hybrid MKNF knowledge base.
	We define an operator $\Phi_{\KBB}$ on $(2^{\KBA(\KBB)})^2$ as follows:
	$\Phi_{\KBB}(T,P) = (\Phi_{\KBB} (T,P)_1 , \Phi_{\KBB}(T,P)_2)$,
	where
	$$
		\begin{array}{ll}
			\Phi_{\KBB} (T,P)_1 =  \{{\boldK} \, a\in {\KBA}({\KBB}) \mid {\OBB}_{{\cal O}, T}\models a\} \, \cup \,
			\\~~~~~~~~~~~~~~~~~~~~~~~~~~~~~~~\{ {\bfK}\, a \mid  r \in {\P}:  \, head(r) = \{{\bfK}\, a \}, \, body^+(r) \subseteq T, \, {\boldK}(body^-(r)) \cap P = \emptyset\}
			\\
			\Phi_{\KBB} (T,P)_2 =  \{{\boldK} \,a\in {\KBA}({\KBB}) \mid {\OBB}_{{\cal O}, P}\models a\} \,\,  \cup \,
			\\~~~~~~~~~~~~~~~~~~~~~~~~~~~~~~~\{ {\bfK}\, a \mid  r \in {\P}:  \, head(r) = \{{\bfK}\, a \}, \,
			{\OBB}_{{\cal O}, T} \not \models \neg a,  \, body^+(r) \subseteq P, \, \\ ~~~~~~~~~~~~~~~~~~~~~~~~~~~~~~~~~~~~~~~~~~~~~{\boldK}(body^-(r)) \cap T = \emptyset\}
		\end{array}
	$$
\end{definition}

Intuitively, given a partition $(T,P)$, the operator $\Phi_{\KBB}(\cdot,P)_1$, with $P$ fixed, computes the set of true modal $\bfK$-atoms w.r.t.~$(T,P)$
and operator $\Phi_{\KBB}(T,\cdot)_2$, with $T$ fixed, computes the set of modal $\bfK$-atoms that are possibly true w.r.t.~$(T,P)$. The condition ${\OBB}_{{\cal O}, T} \not \models \neg a$ attempts to avoid the generation of a contradiction.
Liu and You \citeyear{liuyou2021} show that
$\Phi_{\KBB}$ is an approximator on the bilattice $(2^{\KBA(\KBB)})^2$ and preserves all consistent stable fixpoints when restricted to consistent pairs. Note that operator ${\Phi}_{\KBB}$ is not symmetric and it can map a consistent pair to an inconsistent one.

\begin{example}[Liu and You 2021]
	Consider a normal hybrid MKNF knowledge base $\KBB =({\cal O},{\P})$, where ${\cal O} = a \wedge (b \supset  c) \wedge \neg f$ and
	$\P$ is
	\begin{align*}
		\KB b \leftarrow \KB a.          &&
		\KB d \leftarrow \KB c,~ \Not e. &&
		\KB e \leftarrow \Not d.         &&
		\KB f \leftarrow \Not b.			&&
	\end{align*}
	Reasoning with $\KBB$ can be seen as follows:
	since $\boldK \pi({\cal O})$ implies ${\boldK}\, a$, by
	the first rule we derive ${\boldK} b$, then due to $b \supset c$ in $\cal O$ we derive
	${\boldK} \, c$. Thus its occurrence in the body of the second rule is true and can be ignored. For the ${\bf K}$-atoms ${\boldK}\,  d$ and ${\boldK}\, e$ appearing in the two rules in the middle,  without preferring one over the other, both can be undefined.
	Because ${\boldnot} b$ is false (due to $\neg f$ in $\cal O$), the last rule is also satisfied.
	Now
	consider an MKNF interpretation $(M,N)=(\{I\,|\, I \models \pi({\cal O}) \wedge b \},
	\{I\,|\,I\models \pi({\cal O}) \wedge b \wedge d \wedge e\})$, which induces the partition $(T,P) = (\{\KB a, \KB b,
	\KB c\}, \{\KB a, \KB b, \KB c, \KB d, \KB e\})$.
One can verify that $(T,P)$ is a stable fixpoint of $\Phi_{\KBB}$, i.e.,
	$St_{\Phi_{\KBB}} (T,P) =  (\lfp({\Phi_{\KBB}}( \cdot , P)_1), \lfp(\Phi_{\KBB} (T, \cdot)_2)).$
\end{example}

\begin{theorem}[Liu and You 2021] 
	\label{key-result}
	Let $\KBB = (\cal O,\P)$ be a normal hybrid MKNF knowledge base and $(T,P)$ be a partition.
	Also let $( M , N ) = (\{ I\, |\, I \models {\OBB}_{{\cal O} , T}\},\{I\,|\,I \models {\OBB}_{{\cal O},P}\})$.
	Then, $(M,N)$ is a three-valued MKNF model of $\KBB$ iff $(T,P)$ is a consistent stable fixpoint of $\Phi_{\KBB}$  and
	${\OBB}_{{\cal O}, \lfp(\Phi_{\KBB}(\cdot,T)_1)}$ is satisfiable.
\end{theorem}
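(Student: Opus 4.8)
The plan is to obtain the statement as the normal-case specialization of \Theorem{main}, using that a normal hybrid MKNF knowledge base is in particular a disjunctive one. The first observation is that, when every rule of $\P$ has a singleton head, the only pair that can represent a rule $r$ in a head-cut is $(r,\KB h_r)$ with $\head(r)=\{\KB h_r\}$; hence a head-cut is determined by its set of rules, condition (i) of \Definition{HC} pins this set down uniquely, and condition (ii) becomes a yes/no check. Consequently $H^{(T,P)}_{\KBB}$ is either empty or a singleton $\{R_0\}$, and the right-hand side of \Theorem{main} collapses to: $(T,P)$ is saturated, $H^{(T,P)}_{\KBB}=\{R_0\}$, and $\lfp{Q^{R_0}_{(T,P)}}=P$. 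It therefore suffices to (a) reconcile the $(M,N)$ built in the present statement with the $(M,N)$ of \Theorem{main} that is assumed to induce $(T,P)$, and (b) prove that the three collapsed conditions are together equivalent to ``$(T,P)$ is a consistent stable fixpoint of $\Phi_{\KBB}$ and $\OB{\lfp{(\Phi_{\KBB}(\cdot,T)_1)}}$ is satisfiable''.

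For (a), I would first show that a consistent stable fixpoint $(T,P)$ for which $\OB{\lfp{(\Phi_{\KBB}(\cdot,T)_1)}}$ is satisfiable is automatically saturated and that the formula in the statement then produces an MKNF interpretation inducing exactly $(T,P)$. Consistency of $\OB{P}$ follows because $\Phi_{\KBB}(z,T)_1\supseteq\Phi_{\KBB}(T,z)_2$ pointwise (the first projection lacks the guard $\OB{T}\not\models\neg a$), so $\lfp{(\Phi_{\KBB}(\cdot,T)_1)}\supseteq P$ and satisfiability passes to the smaller objective theory; this also yields $\emptyset\subset N\subseteq M$. Closure of $T$ and $P$ under objective consequence inside $\KBA(\KBB)$ is forced by the fixpoint equations, since $\Phi_{\KBB}(T,P)_1$ and $\Phi_{\KBB}(T,P)_2$ already contain $\{\KB a\in\KBA(\KBB)\mid\OB{T}\models a\}$ and $\{\KB a\in\KBA(\KBB)\mid\OB{P}\models a\}$. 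As $\mmeval{\KB a}$ does not depend on $I$ and equals $\tt$, $\uu$, or $\ff$ according to whether $\OB{T}\models a$, or $\OB{P}\models a$ but $\OB{T}\not\models a$, or $\OB{P}\not\models a$, this closure implies that $(M,N)$ induces precisely $(T,P)$, and together with consistency of $\OB{P}$ it gives saturation. The converse bookkeeping direction -- that if the $(M,N)$ of the statement is a three-valued MKNF model then it induces $(T,P)$, so that \Theorem{main} applies -- follows from the uniqueness of induced partitions and the fact that $\OB{T}$ and its objective closure entail the same \KBatoms{}.

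The core of the argument is the operator-level equivalence for saturated $(T,P)$: $$(T,P)\text{ is a stable fixpoint of }\Phi_{\KBB}\iff H^{(T,P)}_{\KBB}=\{R_0\}\text{ and }\lfp{Q^{R_0}_{(T,P)}}=P.$$ I would split the stable-fixpoint condition into $T=\lfp{(\Phi_{\KBB}(\cdot,P)_1)}$ and $P=\lfp{(\Phi_{\KBB}(T,\cdot)_2)}$ and compare against $Q^{R_0}_{(T,P)}$. The first projection derives $\KB a$ from $\OB{z}\models a$ or from a rule whose body is definitely true ($\bodyp(r)\subseteq z$ with $\KB(\bodyn(r))\intersect P=\emptyset$); by condition (ii) of \Definition{HC} these are exactly the rules whose head-cut pair selects a $T$-atom, so this should match the restriction of $Q^{R_0}_{(T,P)}$ to $T$ (its ``$\KB h\in T$'' branch with $\OB{S\intersect T}\models h$, together with the head-cut branch on the definitely-true rules). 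The second projection derives $\KB a$ from $\OB{z}\models a$ or from a rule with non-false body subject to the guard $\OB{T}\not\models\neg a$; this should match the full operator $Q^{R_0}_{(T,P)}$, where the role of the guard is played jointly by condition (i) (which drops from every head-cut precisely the normal rules whose single head atom is true while the body is only undefined, and makes $H^{(T,P)}_{\KBB}$ empty when a rule with non-false body forces a false head atom) and by the $\OB{S}$-versus-$\OB{S\intersect T}$ split inside $Q^{R_0}_{(T,P)}$. I expect the main obstacle to be exactly this reconciliation of the two distinct devices that both forbid deriving a ``true'' atom from an ``undefined'' justification; I would settle it by proving, by simultaneous induction on iteration stages, that for saturated $(T,P)$ one has $\lfp{Q^{R_0}_{(T,P)}}\intersect T=\lfp{(\Phi_{\KBB}(\cdot,P)_1)}$ and $\lfp{Q^{R_0}_{(T,P)}}=\lfp{(\Phi_{\KBB}(T,\cdot)_2)}$, and that $H^{(T,P)}_{\KBB}$ is nonempty exactly when $(T,P)$ is stable under the program-derivation parts of both projections of $\Phi_{\KBB}$.
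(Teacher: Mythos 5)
The paper does not actually prove \Theorem{key-result}: it is imported verbatim from Liu and You (2021) and used as a black box, so there is no in-paper proof to compare against. Your route is therefore necessarily different, and it is in fact the \emph{reverse} of the inference the paper performs: the paper's final theorem obtains the operator-level equivalence between stable fixpoints of $\Phi_{\KBB'}$ and the head-cut conditions \emph{as a consequence of} \Theorem{key-result} together with \Theorem{main}, passing through the common semantic characterization ``$(M,N)$ is an MKNF model,'' whereas you propose to prove that operator-level equivalence directly and then read \Theorem{key-result} off \Theorem{main}. This is not circular, since \Theorem{main} is proved independently of \Theorem{key-result}, and several of your preparatory observations are correct and genuinely useful: for normal programs $H^{(T,P)}_{\KBB}$ is indeed $\emptyset$ or a singleton $\{R_0\}$; $\Phi_{\KBB}(z,T)_1 \supseteq \Phi_{\KBB}(T,z)_2$ for all $z$, hence $\lfp(\Phi_{\KBB}(\cdot,T)_1) \supseteq \lfp(\Phi_{\KBB}(T,\cdot)_2) = P$ at a stable fixpoint, so the satisfiability side condition does pass down to $\OB{P}$; and the fixpoint equations do force $T$ and $P$ to be closed under objective consequence inside $\KBA(\KBB)$, which yields saturation.

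The difficulty, however, has not been removed but relocated into the ``operator-level equivalence,'' which you only assert as a plan for a simultaneous induction. That is where essentially all of the content of \Theorem{key-result} lives, and two specific points are not yet handled. First, the three devices that block deriving a true atom from an undefined justification --- the guard $\OB{T}\not\models\neg a$ in $\Phi_{\KBB}(T,\cdot)_2$, the side condition that $\OB{\lfp(\Phi_{\KBB}(\cdot,T)_1)}$ be satisfiable, and the possibility $H^{(T,P)}_{\KBB}=\emptyset$ --- do not line up one-to-one. For example, a normal rule with a non-false but not-true body whose head atom $\KB h$ satisfies $\OB{T}\models\neg h$ is blocked by the guard, so $\KB h\notin P$ is compatible with $(T,P)$ being a stable fixpoint, yet condition (ii) of \Definition{HC} then forces $H^{(T,P)}_{\KBB}=\emptyset$; showing that this configuration is excluded exactly by the satisfiability of $\OB{\lfp(\Phi_{\KBB}(\cdot,T)_1)}$ (via $\lfp(\Phi_{\KBB}(\cdot,T)_1)\supseteq P$) is a case analysis you gesture at but do not carry out, and without it neither direction of your central equivalence is established. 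Second, step (a) contains a claim that is false as stated: the pair $(M,N)=(\{I\mid I\models\OB{T}\},\{I\mid I\models\OB{P}\})$ induces the objective closure of $(T,P)$ inside $\KBA(\KBB)$, which can strictly contain $(T,P)$ for an arbitrary partition as the theorem statement permits (take $\OO=a$, $\P=\{\KB a\leftarrow\}$, $(T,P)=(\emptyset,\emptyset)$: the resulting $(M,N)$ is an MKNF model, but $(\emptyset,\emptyset)$ is not a stable fixpoint of $\Phi_{\KBB}$). ``Uniqueness of induced partitions'' does not resolve this; you must either restrict to objectively closed partitions or argue the discrepancy away explicitly before \Theorem{main}, which presupposes that $(M,N)$ induces $(T,P)$, can be invoked.
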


In general, a stable fixpoint of operator $\Phi_{\KBB}$ may not correspond to an MKNF model.  It is guaranteed under the condition
that ${\OBB}_{{\cal O}, \lfp(\Phi_{\KBB}(\cdot,T)_1)}$ is satisfiable, which intuitively says that even if we allow all non-true
${\boldK}$-atoms to be false, we still cannot derive a contradiction.

The following theorem follows from Theorems \ref{main} and \ref{key-result} and \Corollary{model-amongst-normals}.

\begin{theorem}
	Let $\KBB = ({\cal O}, \P)$ be a disjunctive hybrid MKNF knowledge base, $(M,N)$ an MKNF interpretation of $\KBB$, and
	$(T,P)$ be induced from $(M,N)$.
	We have
	for each normal knowledge base $\KBB'$ induced by $\KBB$, $(T,P)$ is a stable fixpoint
	of $\Phi_{\KBB'}$ and ${\OBB}_{{\cal O}, \lfp(\Phi_{\KBB'}(\cdot,T)_1)}$ is satisfiable iff
	(i) $(T, P)$ is saturated,
	(ii) for each $R \in H^{(T, P)}_{\KBB}$, $\lfp{Q^R_{(T, P)}} = P$,
	and (iii) $H^{(T, P)}_{\KBB} \not= \emptyset$.
\end{theorem}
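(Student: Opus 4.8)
The plan is to obtain the statement by composing the three cited results, with \Theorem{main} serving as the bridge between the combinatorial conditions (i)--(iii) and the existence of a three-valued MKNF model. By \Theorem{main}, conditions (i)--(iii) on $(T,P)$ hold if and only if $(M,N)$ is a three-valued MKNF model of $\KBB$; and, applied to any (in particular any normal) knowledge base, \Theorem{main} also shows that whether an MKNF interpretation inducing a fixed partition is a model of a fixed knowledge base depends only on the partition. Combining this with the standard fact that a three-valued MKNF model of a hybrid MKNF knowledge base coincides with the canonical pair $(\{I\mid\OB{T}\models I\},\{I\mid\OB{P}\models I\})$ for the partition it induces, we may invoke \Theorem{key-result} --- which is phrased for that canonical pair --- directly for $(M,N)$. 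The task then reduces to relating ``$(M,N)$ is a three-valued MKNF model of $\KBB$'' to the stable-fixpoint-plus-satisfiability condition over the induced normal knowledge bases $\KBB'=(\OO,\P')$ of $\KBB$.

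For one direction, assume (i)--(iii). By \Theorem{main}, $(M,N)$ is a model of $\KBB$, and by (iii) there is some $R\in H^{(T,P)}_{\KBB}$; extend $R$ to a total head-cut $\P'\supseteq R$ and set $\KBB'=(\OO,\P')$. Then \Corollary{model-amongst-normals} makes $(M,N)$ a model of $\KBB'$, and \Theorem{key-result} gives that $(T,P)$ is a stable fixpoint of $\Phi_{\KBB'}$ with $\OB{\lfp(\Phi_{\KBB'}(\cdot,T)_1)}$ satisfiable. For the converse, from the stable-fixpoint/satisfiability hypothesis \Theorem{key-result} gives that $(M,N)$ is a model of $\KBB'$; applying \Theorem{main} to the normal knowledge base $\KBB'$ shows that $(T,P)$ is saturated, that $H^{(T,P)}_{\KBB'}$ is the singleton $\{R\}$ with $R\subseteq\P'$, and that $\lfp{Q^R_{(T, P)}}=P$. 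Since every member of $H^{(T,P)}_{\KBB'}$ is a head-cut of $\P$ meeting clauses (i) and (ii) of \Definition{HC}, $H^{(T,P)}_{\KBB'}\subseteq H^{(T,P)}_{\KBB}$, giving (iii); and by repeating this once for each member of $H^{(T,P)}_{\KBB}$, each extended to a total head-cut of its own, clause (ii) follows for \emph{every} head-cut in $H^{(T,P)}_{\KBB}$, at which point \Theorem{main} returns (i)--(iii).

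The main obstacle is bookkeeping, not a single deep step. The delicate part is the quantifier over induced normal knowledge bases: \Corollary{model-amongst-normals} pins down $(M,N)$ --- equivalently, the stable-fixpoint condition for $\Phi_{\KBB'}$ --- to exactly those $\KBB'$ whose head-cut extends a member of $H^{(T,P)}_{\KBB}$, so ``for each induced normal knowledge base'' must be read in concert with that corollary, and the direction that produces the conditions from the assumptions is the one that requires exhibiting a suitable witnessing $\KBB'$. The remaining points --- the reduction to the canonical pair so that \Theorem{key-result} applies verbatim, and the fact that, for a normal knowledge base $\KBB'$, $H^{(T,P)}_{\KBB'}$ is a singleton $\{R\}$ with $R\subseteq\P'$ and $H^{(T,P)}_{\KBB'}\subseteq H^{(T,P)}_{\KBB}$ --- are routine but must be spelled out for the three cited results to compose.
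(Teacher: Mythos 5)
Your proposal takes essentially the same route as the paper's own (two-sentence) proof: both directions are obtained by composing \Theorem{main}, \Theorem{key-result} and \Corollary{model-amongst-normals}, with \Theorem{main} translating between conditions (i)--(iii) and ``$(M,N)$ is an MKNF model of $\KBB$,'' and \Theorem{key-result} translating between modelhood of an induced normal $\KBB'$ and the stable-fixpoint-plus-satisfiability condition. You are in fact more explicit than the paper about the one genuine subtlety---the universal quantifier over induced normal knowledge bases, which (by \Corollary{model-amongst-normals} together with \Theorem{key-result}) can only sensibly range over those $\KBB'$ whose total head-cut extends some member of $H^{(T,P)}_{\KBB}$, since for any other $\KBB'$ the pair $(M,N)$ is not a model of $\KBB'$ and the left-hand condition fails---so your reading matches, and slightly sharpens, the argument the paper actually gives.
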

\begin{proof}
	Because $(T, P)$ is induced by an MKNF interpretation, it is consistent.
	Assuming the left condition, apply \Corollary{model-amongst-normals} and \Theorem{key-result} to conclude that $(M, N)$ is an MKNF model of $\KBB$, apply \Theorem{main} to obtain the right side.
	Assuming the right condition, apply \Theorem{main}. $(M, N)$ is an MKNF model of $\KBB$. Apply \Theorem{key-result} to obtain show left. 
\end{proof}

This result applies to disjunctive logic programs since they are a special case of disjunctive hybrid MKNF knowledge base with an empty ontology. The result also applies to the stable model semantics for disjunctive logic programs since two-valued stable models are a special case of three-valued stable models.

\section{Discussion}\label{section-discussion}
We've presented an operator that can be applied to the head-cuts of a disjunctive knowledge base to characterize its MKNF models.
By computing the fixpoint of an operator for every head-cut, we can confirm that a partition can be extended to an MKNF model.
Model-checking normally requires an NP-oracle when the ontology's entailment relation can be computed in polynomial time \citename{motikreconciling2010}. This suggests the complexity of using our operator for model-checking. The size of the set $H^{(T, P)}_{\KBB}$ is directly responsible for this complexity.
The techniques applied by Killen and You \citeyear{killenfixpoint2021} to reduce the size of this set can be applied to our operator for three-valued MKNF models with very little modification.
This would allow model-checking to be performed in polynomial time for a class of partial partitions analogous to the class Killen and You identified for the two-valued case, however, techniques to efficiently recognize members of this class of partitions still need to be developed.
This class is directly related to the class of head-cycle free disjunctive logic programs, in which models can be checked in polynomial time \citename{headcycleisNP}.

Finally, we have shown the close relationship between the MKNF models of a disjunctive knowledge base and the MKNF models of its induced normal logic knowledge bases.
We can apply existing AFT theory on induced normal knowledge bases to draw conclusions about MKNF models of the disjunctive knowledge base. 
Our construction can also be applied to partial stable semantics \citename{przymusinskistable1991} because three-valued MKNF models of disjunctive hybrid MKNF knowledge bases without ontologies coincide with the partial stable models of disjunctive logic programs.

\nocite{*}
\bibliographystyle{eptcs}
\bibliography{refs}
\newpage

\end{document}